\newcommand{\mname}[0]{\text{Composite Score Matching}}
\newcommand{\bs}[1]{\boldsymbol{#1}}
\newcommand{\mbb}[1]{\mathbb{#1}}
\newcommand{\mcal}{\mathcal}
\newtheorem{lemma}{Lemma}
\newtheorem{theorem}{Theorem}
\newtheorem{corollary}{Corollary}
\newtheorem{definition}{Definition}
\newenvironment{customthm}[1]
{\innercustomthm}
{\endinnercustomthm}
\newcommand{\be}{\begin{equation}}
	\newcommand{\ee}{\end{equation}}
\definecolor{Gray}{gray}{0.85}
\definecolor{LightCyan}{rgb}{0.88,1,1}
\def\@onedot{\ifx\@let@token.\else.\null\fi\xspace}
\DeclareRobustCommand\onedot{\futurelet\@let@token\@onedot}
\newcommand{\eqnref}[1]{Eq\onedot~(\plaineqref{#1})}
\newcommand{\figref}[1]{Fig\onedot~\ref{#1}}
\newcommand{\equref}[1]{Eq\onedot~\eqref{#1}}
\newcommand{\tabref}[1]{Tab\onedot~\ref{#1}}
\newcommand{\appref}[1]{Appendix~\ref{#1}}
\newcommand{\lemref}[1]{Lemma~\ref{#1}}
\newcommand{\bfx}{\mathbf{x}}
\newcommand{\bfc}{\mathbf{c}}
\newcommand{\bfz}{\mathbf{z}}
\newcommand{\bfe}{{\bs{\epsilon}}}
\newcommand{\bfh}{\mathbf{h}}
\def\eg{\emph{e.g}\onedot}
\def\ie{\emph{i.e}\onedot}
\def\figref#1{Figure~\ref{#1}}
\def\eqref#1{equation~\ref{#1}}
\def\plaineqref#1{\ref{#1}}
\def\1{\bm{1}}
\def\eps{{\epsilon}}
\DeclareMathAlphabet{\mathsfit}{\encodingdefault}{\sfdefault}{m}{sl}
\SetMathAlphabet{\mathsfit}{bold}{\encodingdefault}{\sfdefault}{bx}{n}
\definecolor{LightCyan}{rgb}{0.88,1,1}
\title{Autoregressive Score Matching}
\author{
  Chenlin Meng\\
  Stanford University\\
  \texttt{chenlin@stanford.edu}\\
  \And
  Lantao Yu\\
  Stanford University\\
  \texttt{lantaoyu@cs.stanford.edu} \\
  \And
  Yang Song\\
  Stanford University\\
  \texttt{yangsong@cs.stanford.edu}\\
  \And
  Jiaming Song\\
  Stanford University\\
  \texttt{tsong@cs.stanford.edu}\\
  \And
  Stefano Ermon\\
  Stanford University\\
  \texttt{ermon@cs.stanford.edu}\\
}
\begin{document}
\maketitle
\begin{abstract}
Autoregressive models use chain rule to define a joint probability distribution as a product of conditionals. These conditionals need to be normalized, imposing constraints on the functional families that can be used. To increase flexibility, we propose autoregressive conditional score models (AR-CSM) where we parameterize the joint distribution in terms of the derivatives of univariate log-conditionals (scores), which need not be normalized.
To train AR-CSM, we introduce a new divergence between distributions named \mname{} (CSM).
For AR-CSM models, this divergence between data and model distributions can be computed and optimized  efficiently, requiring no expensive sampling or adversarial training. Compared to previous score matching algorithms, our method is more scalable to high dimensional data and more stable to optimize. 
We show with extensive experimental results that it can be applied to density estimation on synthetic data, image generation, image denoising, and training latent variable models with implicit encoders.

\end{abstract}

\section{Introduction}

Autoregressive models play a crucial role in modeling high-dimensional probability distributions. They have been successfully used to generate realistic images~\cite{oord2016pixel, salimans2017pixelcnn++}, high-quality speech~\cite{oord2016wavenet}, and complex decisions in games~\cite{vinyals2019grandmaster}. An autoregressive model defines a probability density as a product of conditionals using the chain rule. Although this factorization is fully general, 
autoregressive models typically rely on simple 
probability density functions for the conditionals (e.g. a Gaussian or a mixture of logistics)~\cite{salimans2017pixelcnn++} in the continuous case, which limits the expressiveness of the model.

To improve flexibility, energy-based models (EBM) represent a density in terms of an energy function, which does not need to be normalized. This enables more flexible neural network architectures, but requires new training strategies, since maximum likelihood estimation (MLE) is 
intractable due to the normalization constant (partition function).
Score matching (SM)~\cite{hyvarinen2005estimation} %
trains  
EBMs 
by minimizing the Fisher divergence (instead of KL divergence as in MLE)  between model and data distributions. It compares distributions in terms of their log-likelihood gradients (scores) and completely circumvents the intractable partition function. However, score matching requires computing the trace of the Hessian matrix of the model's log-density, which is expensive for high-dimensional data~\cite{martens2012estimating}.

To avoid calculating the partition function without losing scalability in high dimensional settings, 
we leverage the chain rule to decompose a high dimensional distribution matching problem into simpler univariate sub-problems.
Specifically, we propose a new divergence between distributions, named \mname{} (CSM), which depends only on the derivatives of \emph{univariate} log-conditionals (scores) of the model, instead of the full gradient as in score matching. %
CSM training is particularly efficient when the model is represented directly in terms of these univariate conditional scores. This is similar to a traditional autoregressive model, but with the advantage that conditional scores, unlike conditional distributions, do not need to be normalized. Similar to EBMs, removing the normalization constraint increases the flexibility of model families that can be used.

Leveraging existing and well-established autoregressive models, we design architectures where we can evaluate all dimensions in parallel for efficient training. %
During training, our CSM divergence can be optimized directly without the need of 
approximations~\cite{nash2019autoregressive, song2019sliced}, surrogate losses~\cite{kingma2013auto}, adversarial training~\cite{goodfellow2014generative} or extra sampling~\cite{du2019implicit}. 
We show with extensive experimental results that our method can be used for density estimation, data generation, image denoising and anomaly detection. We also illustrate that CSM can provide accurate score estimation required for variational inference with implicit distributions~\cite{huszar2017variational,song2019sliced} by providing better likelihoods and FID~\cite{heusel2017gans} scores compared to other training methods on image datasets.

\section{Background}
Given i.i.d. samples $\{\bfx^{(1)},...,\bfx^{(N)}\} \subset \mathbb{R}^D$ from some unknown data
distribution $p(\bfx)$, we want to learn an unnormalized density $\Tilde{ q}_{\theta}(\bfx)$ as a parametric approximation to $p(\bfx)$. The unnormalized $\Tilde{ q}_{\theta}(\bfx)$ uniquely defines the following normalized probability density:
\begin{equation}
    q_{\theta}(\bfx)=\frac{\Tilde{q}_{\theta} (\bfx)}{Z(\theta)},\;  Z({\theta})=\int \Tilde{q}_{\theta}(\bfx)d\bfx,
    \label{eq:unnormalized_q}
\end{equation}
where $Z(\theta)$, the partition function, is generally intractable.

\subsection{Autoregressive Energy Machine}
To learn an unnormalized probabilistic model, \cite{nash2019autoregressive} proposes to approximate the normalizing constant using one dimensional importance sampling. Specifically, let $\bfx=(x_1,..., x_D)\in \mathbb{R}^D$. They first learn a set of one dimensional conditional energies $E_{\theta}(x_{d}| \bfx_{<d})\triangleq -\log \Tilde{q}_{\theta}(x_d|\bfx_{<d})$,
and then approximate the normalizing constants using importance sampling, which introduces an additional network to parameterize the proposal distribution. Once the partition function is approximated, they normalize the density to enable maximum likelihood training. However, approximating the partition function not only introduces bias into optimization but also requires extra computation and memory usage, lowering the training efficiency.

\subsection{Score Matching}
To avoid computing $Z(\theta)$, we can take the logarithm on both sides of \eqnref{eq:unnormalized_q} and obtain $\log q_{\theta}(\bfx)=\log \Tilde{q}_{\theta}(\bfx)-\log Z(\theta)$.  Since $Z(\theta)$ does not depend on $\bfx$, we can ignore the intractable
partition function $Z(\theta)$ when optimizing $\nabla_{\bfx}\log q_{\theta}(\bfx)$. In general, $\nabla_{\bfx}\log q_{\theta}(\bfx)$ and $\nabla_{\bfx}\log p(\bfx)$ are called the \textit{score} of $ q_{\theta}(\bfx)$ and $p(\bfx)$ respectively. Score matching (SM)~\cite{hyvarinen2005estimation} learns $q_{\theta}(\bfx)$ by matching the scores between $q_{\theta}(\bfx)$ and $p(\bfx)$ using the Fisher divergence:
\begin{align}
    L(q_{\theta};p) \triangleq \frac{1}{2}\mathbb{E}_{p} [\|\nabla_\bfx \log p(\bfx) - \nabla_\bfx \log q_\theta(\bfx)\|_2^2].
\label{eq:score_matching_original}
\end{align}
Ref.~\cite{hyvarinen2005estimation} shows that under certain regularity conditions $L(\theta;p)=J(\theta;p)+C$, where $C$ is a constant that does not depend on $\theta$ and $J(\theta; p)$ is defined as below:
\begin{align*}
    J(\theta;p) \triangleq \mathbb{E}_{p} \bigg[\frac{1}{2}\|\nabla_\bfx \log q_{\theta}(\bfx) \|_2^2+\text{tr} (\nabla^2_\bfx \log q_\theta(\bfx))\bigg],
\end{align*}
where $\text{tr}(\cdot)$ denotes the trace of a matrix. The above objective does not involve the intractable term $\nabla_{\bfx} \log p(\bfx)$. However, computing $\text{tr}(\nabla^2_\bfx \log q_\theta(\bfx))$ is in general expensive for high dimensional data. 
Given $\bfx \in \mathbb{R}^{D}$, a naive approach requires $D$ times more backward passes than computing the gradient $\nabla_\bfx \log q_{\theta}(\bfx)$~\cite{song2019sliced} in order to compute $\text{tr}(\nabla^2_\bfx \log q_\theta(\bfx))$, which is inefficient when $D$ is large. In fact, ref.~\cite{martens2012estimating} shows that within a constant number of
forward and backward passes, it is unlikely for an algorithm to be able to 
compute the diagonal of a Hessian matrix defined by any arbitrary computation graph.

\section{\mname{}}

To make SM more scalable, we introduce \mname{} (CSM), a new divergence suitable for learning unnormalized statistical models. 
 We can factorize  any given data distribution $p(\bfx)$ and model distribution $q_{\theta}(\bfx)$ using the chain rule according to a common variable ordering:
\begin{equation*}
    p(\bfx) = \prod_{d=1}^{D} p(x_d|\bfx_{<d}),\; \quad q_{\theta}(\bfx) = \prod_{d=1}^{D} q_{\theta}(x_d|\bfx_{<d})
\end{equation*}
where $x_d\in \mathbb{R}$ stands for the $d$-th component of $\bfx$, and $\bfx_{<d}$ refers to all the entries with indices smaller than $d$ in $\bfx$.
Our key insight is that instead of directly matching the joint distributions, we can match the conditionals of the model $q_{\theta}(x_d|\bfx_{<d})$ to the conditionals of the data $p(x_d|\bfx_{<d})$ using the Fisher divergence. This decomposition results in simpler problems, which can be optimized efficiently using \emph{one-dimensional} score matching. For convenience, we denote the conditional scores of 
$q_{\theta}$ and $p$ as $s_{\theta, d}(\bfx_{<d}, x_d)\triangleq \frac{\partial}{\partial x_d} \log q_\theta(x_d | \bfx_{<d}): \mathbb{R}^{d-1}\times \mathbb{R} \rightarrow \mathbb{R}$
and
$s_{d}(\bfx_{<d}, x_d)\triangleq \frac{\partial}{\partial x_d} \log p(x_d | \bfx_{<d}):\mathbb{R}^{d-1}\times \mathbb{R} \rightarrow \mathbb{R}$ respectively. 
This gives us a new divergence termed \textit{\mname{} (CSM)}: 
\begin{align}
    L_{CSM}(q_{\theta}; p) = \frac{1}{2}\sum_{d=1}^D \mathbb{E}_{p(\bfx_{< d})} \mathbb{E}_{p(x_d|\bfx_{<d})} \bigg[ (s_d(\bfx_{<d}, x_d) - s_{\theta,d}(\bfx_{<d}, x_d)) ^2\bigg]. 
\label{eq:ar-sm-original}
\end{align}

This divergence is inspired by composite scoring rules~\cite{dawid2014theory}, 
a general technique to decompose  distribution-matching problems into lower-dimensional ones. As such, it bears some similarity with pseudo-likelihood, a composite scoring rule based on KL-divergence. As shown in the following theorem, it can be used as a learning objective to compare probability distributions:
\begin{restatable}[CSM Divergence]{theorem}{csmd}
\label{thm:csmd}
$L_{CSM}(q_{\theta}, p)$ vanishes if and only if $q_{\theta}(\bfx)=p(\bfx)$ a.e. 
\end{restatable}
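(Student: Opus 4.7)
The plan is to establish the two directions separately, with the nontrivial content being the ``only if'' direction. The proof naturally reduces, dimension by dimension, to the classical fact that two univariate densities agree if their scores agree.

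For the easy direction, I would observe that if $q_\theta(\bfx) = p(\bfx)$ almost everywhere, then by the chain rule the conditionals $q_\theta(x_d \mid \bfx_{<d})$ and $p(x_d \mid \bfx_{<d})$ coincide almost everywhere for each $d$, so their derivatives in $x_d$ coincide, and hence every integrand in \eqref{eq:ar-sm-original} vanishes, giving $L_{CSM}(q_\theta;p)=0$.

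For the ``only if'' direction, I would exploit non-negativity. Each summand in \eqref{eq:ar-sm-original} is an expectation of a square and is therefore non-negative, so $L_{CSM}(q_\theta; p) = 0$ forces
\[
\mathbb{E}_{p(\bfx_{<d})}\mathbb{E}_{p(x_d \mid \bfx_{<d})}\bigl[(s_d(\bfx_{<d},x_d) - s_{\theta,d}(\bfx_{<d},x_d))^2\bigr] = 0
\]
for every $d \in \{1,\dots,D\}$. By the vanishing of a non-negative integrand, for $p(\bfx_{<d})$-almost every $\bfx_{<d}$, the conditional scores satisfy $\partial_{x_d}\log p(x_d \mid \bfx_{<d}) = \partial_{x_d} \log q_\theta(x_d \mid \bfx_{<d})$ for $p(x_d \mid \bfx_{<d})$-almost every $x_d$. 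Under the standard score-matching regularity assumption that both densities are strictly positive on $\mathbb{R}^D$, these equalities extend to Lebesgue-almost every $x_d \in \mathbb{R}$. Integrating in $x_d$ then yields $\log p(x_d \mid \bfx_{<d}) - \log q_\theta(x_d \mid \bfx_{<d}) = c(\bfx_{<d})$, i.e.\ the two conditional densities agree up to a multiplicative factor depending only on $\bfx_{<d}$. Since both are normalized probability densities in $x_d$, the factor must equal $1$, so $p(x_d \mid \bfx_{<d}) = q_\theta(x_d \mid \bfx_{<d})$ for $p(\bfx_{<d})$-almost every $\bfx_{<d}$.

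Finally, I would assemble the $D$ pointwise-a.e.\ conditional equalities back into a joint equality. Applying the chain-rule factorization $p(\bfx) = \prod_d p(x_d \mid \bfx_{<d})$ and $q_\theta(\bfx) = \prod_d q_\theta(x_d \mid \bfx_{<d})$, and using that each factor matches $p$-almost everywhere, gives $q_\theta(\bfx) = p(\bfx)$ for $p$-almost every $\bfx$, and hence Lebesgue-almost everywhere under the positivity assumption on $p$.

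The main obstacle is the measure-theoretic passage in the penultimate step: the score equality is granted only on the (joint) support of $p$, so to recover equality of the two conditional densities on all of $\mathbb{R}$ (needed for the ``integration of the derivative'' step and for the normalization argument), one must invoke a support/positivity hypothesis on $p$ and $q_\theta$, which is the same caveat that appears in the original score matching identification result of Hyv\"arinen. Once that assumption is in place, the reduction to one-dimensional score matching makes the remaining arguments routine.
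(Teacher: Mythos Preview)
Your proposal is correct and follows essentially the same route as the paper: both directions hinge on non-negativity of each summand, the univariate score-matching identification (the paper invokes strict properness of the Fisher divergence where you spell out the integrate-and-normalize argument), and then reassembly of the joint from the conditionals. The only cosmetic difference is that the paper carries out the final assembly by an explicit induction on $d$ showing $q_\theta(\bfx_{\le d})=p(\bfx_{\le d})$ a.e., which makes the measure-theoretic bookkeeping you flag in your last paragraph a bit cleaner, but the content is the same.
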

\begin{proof}[Proof Sketch] If the distributions match, their derivatives (conditional scores) must be the same, hence $L_{CSM}$ is zero. If $L_{CSM}$ is zero, the conditional scores must be the same, and that uniquely determines the joints. See Appendix for a formal proof.
\end{proof}
\eqnref{eq:ar-sm-original} involves $s_{d}(\bfx)$, the unknown score function of the data distribution. Similar to score matching, we can apply integration by parts to obtain an equivalent but tractable expression: 
\begin{align}
J_{CSM}(\theta; p) = \sum_{d=1}^D \mathbb{E}_{p(\bfx_{< d})} \mathbb{E}_{p(x_d|\bfx_{<d})} \bigg[\frac{1}{2} s_{\theta,d}(\bfx_{<d}, x_d)^2 + \frac{\partial}{\partial x_d} s_{\theta,d}(\bfx_{<d},x_d)\bigg],
\label{eq:ar-sm}
\end{align}
The equivalence can be summarized using the following results:

\begin{restatable}[Informal]{theorem}{arobjective}
\label{thm:ar-objective}
 Under some regularity conditions, $L_{CSM}(\theta; p)=J_{CSM}(\theta; p)+C$ where $C$ is a constant that does not depend on $\theta$.
\end{restatable}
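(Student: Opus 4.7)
The plan is to reduce the $D$-dimensional claim to $D$ independent one-dimensional score matching identities, one for each coordinate $d$, conditioned on the preceding coordinates $\bfx_{<d}$. Concretely, I would rewrite the squared term in $L_{CSM}$ dimension-by-dimension,
\begin{equation*}
\tfrac{1}{2}\mathbb{E}_{p(x_d|\bfx_{<d})}\bigl[(s_d(\bfx_{<d},x_d)-s_{\theta,d}(\bfx_{<d},x_d))^2\bigr]
= \tfrac{1}{2}\mathbb{E}[s_d^2] - \mathbb{E}[s_d\,s_{\theta,d}] + \tfrac{1}{2}\mathbb{E}[s_{\theta,d}^2],
\end{equation*}
and observe that the first term depends only on $p$, so it contributes to the $\theta$-independent constant $C$ after taking the outer expectation over $p(\bfx_{<d})$ and summing over $d$. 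The third term is already the first term inside $J_{CSM}$, so the whole equivalence reduces to showing that the cross term can be replaced by $-\mathbb{E}[\partial_{x_d} s_{\theta,d}]$.

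For the cross term, the key step is a one-dimensional integration by parts in $x_d$ at each fixed $\bfx_{<d}$. Using $p(x_d|\bfx_{<d})\, s_d(\bfx_{<d},x_d) = \partial_{x_d} p(x_d|\bfx_{<d})$, I would write
\begin{equation*}
\mathbb{E}_{p(x_d|\bfx_{<d})}[s_d\, s_{\theta,d}]
= \int s_{\theta,d}(\bfx_{<d},x_d)\,\frac{\partial p(x_d|\bfx_{<d})}{\partial x_d}\, dx_d,
\end{equation*}
and integrate by parts in $x_d$. The boundary term is $[\,p(x_d|\bfx_{<d})\,s_{\theta,d}(\bfx_{<d},x_d)\,]_{-\infty}^{\infty}$, which vanishes under the natural regularity assumptions (namely, $p(x_d|\bfx_{<d}) s_{\theta,d}(\bfx_{<d},x_d)\to 0$ as $|x_d|\to\infty$, plus integrability of $p\,\partial_{x_d}s_{\theta,d}$), leaving $-\mathbb{E}_{p(x_d|\bfx_{<d})}[\partial_{x_d} s_{\theta,d}]$. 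This is exactly the classical Hyv\"arinen trick, applied to the univariate conditional rather than the full joint.

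After this substitution, I would take the outer expectation with respect to $p(\bfx_{<d})$, sum over $d=1,\dots,D$, and collect the $\theta$-independent pieces into $C := \tfrac{1}{2}\sum_{d}\mathbb{E}_{p(\bfx_{<d})}\mathbb{E}_{p(x_d|\bfx_{<d})}[s_d^2]$, which is finite under the assumption that each conditional score of $p$ has a finite second moment. Fubini is justified by the same integrability conditions, so the order of the nested expectations and the sum can be interchanged freely.

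The main obstacle, as in Hyv\"arinen's original theorem, is stating and justifying the regularity hypotheses cleanly so that (i) the boundary term in the integration by parts vanishes for every fixed $\bfx_{<d}$ and every $\theta$ in the parameter set, and (ii) the constant $C$ and the cross term are both finite, so the manipulations above are not formally rearranging divergent quantities. Everything else is routine algebra, and the dimension-wise decomposition means one never confronts a multi-dimensional Hessian; only univariate derivatives $\partial_{x_d} s_{\theta,d}$ appear, which is precisely the scalability advantage that motivates the result.
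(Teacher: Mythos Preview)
Your proposal is correct and matches the paper's own proof in substance: both reduce $L_{CSM}$ to a sum over $d$ of one-dimensional Fisher divergences on the conditionals $p(x_d\mid\bfx_{<d})$ and then apply Hyv\"arinen's integration-by-parts identity in $x_d$ at fixed $\bfx_{<d}$, collecting the $\theta$-free pieces into $C=\tfrac12\sum_d \mathbb{E}_{p(\bfx_{\le d})}[s_d^2]$. The only cosmetic difference is that the paper invokes Hyv\"arinen's Theorem~1 as a black box on each conditional and then averages the resulting constants $C(\bfx_{<d})$ over $p(\bfx_{<d})$, whereas you spell out the expansion and the integration by parts explicitly; the regularity conditions you identify (vanishing boundary term $p(x_d\mid\bfx_{<d})\,s_{\theta,d}\to 0$ and finite second moments of the conditional scores) are exactly those listed in the paper's appendix.
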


\begin{proof}[Proof Sketch] Integrate by parts the one-dimensional SM objectives. See Appendix for a proof.
\end{proof}
\begin{corollary}
Under some regularity conditions, $J_{CSM}(\theta, p)$ is minimized when $q_{\theta}(\bfx)=p(\bfx)$ a.e. 
\label{thm:valid_loss}
\end{corollary}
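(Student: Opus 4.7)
The plan is to chain together the two results just proved. By \thmref{thm:ar-objective}, under the stated regularity conditions we have the identity $J_{CSM}(\theta;p) = L_{CSM}(q_\theta;p) + C$ where $C$ is independent of $\theta$. Minimizing $J_{CSM}$ in $\theta$ is therefore equivalent to minimizing $L_{CSM}(q_\theta;p)$ in $\theta$, so it suffices to analyze the latter.

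Next I would observe that $L_{CSM}(q_\theta;p)$, as defined in \equref{eq:ar-sm-original}, is a sum of expectations of squared quantities and is therefore nonnegative, with a global lower bound of $0$. By \thmref{thm:csmd}, this lower bound is attained precisely when $q_\theta(\bfx) = p(\bfx)$ almost everywhere. Hence any $\theta^\ast$ for which $q_{\theta^\ast} = p$ a.e.\ achieves $L_{CSM}(q_{\theta^\ast};p) = 0$, which is the minimum possible value of $L_{CSM}$, and consequently of $J_{CSM}(\theta;p)$ (up to the constant $C$).

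There is no real obstacle here: the corollary is essentially a one-line consequence of combining \thmref{thm:csmd} (CSM divergence property) with \thmref{thm:ar-objective} (equivalence up to an additive constant). The only subtlety worth noting is that the statement should be read as "attains its minimum" rather than "is uniquely minimized": strictly speaking, the corollary only guarantees that $q_\theta = p$ a.e.\ achieves the minimum, not that every minimizer satisfies this. However, combining both directions of \thmref{thm:csmd} shows the minimum is attained if and only if $q_\theta = p$ a.e., so in fact the "if and only if" version also holds under the same regularity conditions. I would state the proof in one short paragraph citing \thmref{thm:ar-objective} and \thmref{thm:csmd}, and no further calculation is required.
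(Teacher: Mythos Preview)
Your proposal is correct and matches the paper's approach: the corollary is stated without a separate proof and is simply the immediate consequence of combining \thmref{thm:csmd} with \thmref{thm:ar-objective}, exactly as you outline. The minor sign flip in your constant (you write $J_{CSM}=L_{CSM}+C$ whereas the paper has $L_{CSM}=J_{CSM}+C$) is immaterial since $C$ is $\theta$-independent.
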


In practice, the expectation in $J_{CSM}(\theta; p)$ can be approximated by a sample average 
using the following unbiased estimator 
\begin{align}
\hat{J}_{CSM}(\theta; p)\triangleq \frac{1}{N}\sum_{i=1}^{N}\sum_{d=1}^{D}\bigg[\frac{1}{2}s_{\theta,d}(\bfx_{<d}^{(i)}, x_d^{(i)})^2 + \frac{\partial}{\partial x^{(i)}_d} s_{\theta,d}(\bfx_{<d}^{(i)}, x_d^{(i)}) \bigg],
\label{eq:csm:finitesample}
\end{align}
where $\{\bfx^{(1)},..., \bfx^{(N)}\}$ are i.i.d samples from $p(\bfx)$.
It is clear from \eqnref{eq:csm:finitesample} that evaluating $\hat{J}_{CSM}(\theta; p)$ is efficient as long as it is efficient to evaluate
$s_{\theta, d}(\bfx_{<d}, x_d)\triangleq \frac{\partial}{\partial x_d} \log q_\theta(x_d | \bfx_{<d})$ and its derivative $\frac{\partial}{\partial x_d} s_{\theta,d}(\bfx_{<d}, x_d)$. This in turn depends on how the model $q_\theta$ is represented. For example, if $q_\theta$ is an energy-based model defined in terms of an energy $\Tilde{q}_{\theta}$  as in \eqnref{eq:unnormalized_q}, computing  $q_\theta(x_d | \bfx_{<d})$ (and hence its derivative, $s_{\theta, d}(\bfx_{<d}, x_d)$) is generally intractable.  %
On the other hand, if $q_\theta$ is a traditional autoregressive model represented as a product of normalized conditionals, then $\hat{J}_{CSM}(\theta; p)$ will be efficient to optimize, but the normalization constraint may limit expressivity. In the following, we propose a parameterization tailored for CSM training, where we represent a joint distribution directly in terms of $s_{\theta, d}(\bfx_{<d},x_d), d=1, \cdots D$ without normalization constraints.

\section{Autoregressive conditional score models}

We introduce a new class of probabilistic models, named \emph{autoregressive conditional score models} (AR-CSM), defined as follows:
\begin{definition}
\label{def:arcsm}
An autoregressive conditional score model over $\mathbb{R}^D$ is a collection of $D$ functions
$\hat{s}_d(\bfx_{< d}, x_d):\mathbb{R}^{d-1} \times \mathbb{R} \to \mathbb{R}$, such that for all $d=1, \cdots, D$:
\begin{itemize}
     \item [1.] For all $\bfx_{< d} \in \mathbb{R}^{d-1}$,
     there exists a function
     $\mathcal{E}_d(\bfx_{< d}, x_d): \mathbb{R}^{d-1} \times \mathbb{R} \to \mathbb{R}$ such that $\frac{\partial}{\partial x_d}\mathcal{E}_d(\bfx_{< d}, x_d)$ exists, and
    $\frac{\partial}{\partial x_d}\mathcal{E}_d(\bfx_{< d}, x_d)=\hat{s}_d(\bfx_{< d}, x_d)$.  %
    \item [2.] For all $\bfx_{< d} \in \mathbb{R}^{d-1}$,  
    $Z_d(\bfx_{< d})\triangleq \int e^{\mathcal{E}_d(\bfx_{< d}, x_d)} d x_d$ exists and is finite (\ie, the improper integral \textit{w.r.t.} $x_d$ is convergent).
\end{itemize}
\end{definition}
Autoregressive conditional score models are an expressive family of probabilistic models for continuous data. In fact, there is a one-to-one mapping between the set of autoregressive conditional score models and a large set of probability densities over $\mathbb{R}^D$:
\begin{theorem}
\label{thm:bijection_family}
There is a one-to-one mapping between the set of autoregressive conditional score models over $\mathbb{R}^D$ and the set of probability density functions $q(\bfx)$ fully supported over $\mathbb{R}^D$ such that  $\frac{\partial}{\partial x_d}\log q(x_d|\bfx_{<d})$ exists for all $d$ and $\bfx_{< d} \in \mathbb{R}^{d-1}$. 
The mapping pairs conditional scores and densities such that
\[
\hat{s}_d(\bfx_{< d}, x_d) = \frac{\partial}{\partial x_d}\log q(x_d|\bfx_{<d})
\]
\end{theorem}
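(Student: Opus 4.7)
The plan is to exhibit the mapping $\Phi$ explicitly, construct a candidate inverse $\Psi$ using the second axiom of Definition~\ref{def:arcsm}, and then verify $\Phi \circ \Psi = \mathrm{id}$ and $\Psi \circ \Phi = \mathrm{id}$. Define $\Phi$ on the specified class of densities $q$ by $\Phi(q)_d(\bfx_{<d}, x_d) = \frac{\partial}{\partial x_d}\log q(x_d \mid \bfx_{<d})$. To see that $\Phi(q)$ lies in the AR-CSM class, take $\mathcal{E}_d(\bfx_{<d}, x_d) = \log q(x_d \mid \bfx_{<d})$; its $x_d$-derivative equals $\Phi(q)_d$ by assumption, and $\int e^{\mathcal{E}_d} \ud x_d = \int q(x_d \mid \bfx_{<d})\ud x_d = 1$, so both clauses of Definition~\ref{def:arcsm} are satisfied.

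Next I would construct $\Psi$. Given an AR-CSM $(\hat s_d)_{d=1}^D$, pick antiderivatives $\mathcal{E}_d$ as in the definition, set
\begin{equation*}
q_d(x_d \mid \bfx_{<d}) \;=\; \frac{e^{\mathcal{E}_d(\bfx_{<d}, x_d)}}{Z_d(\bfx_{<d})}, \qquad \Psi(\hat s)(\bfx) \;=\; \prod_{d=1}^D q_d(x_d \mid \bfx_{<d}).
\end{equation*}
This is a valid, strictly positive, fully supported density on $\mathbb{R}^D$ because $Z_d > 0$ is finite by the second axiom and $e^{\mathcal{E}_d} > 0$ pointwise. I would then check that $\Psi$ is well-defined, i.e.\ independent of the choice of antiderivative: any other antiderivative of $\hat s_d$ differs from $\mathcal{E}_d$ by a function $C(\bfx_{<d})$ of $\bfx_{<d}$ alone, so the factor $e^{C(\bfx_{<d})}$ appears in both numerator and denominator and cancels.

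For $\Phi \circ \Psi = \mathrm{id}$, observe that by construction $\log q_d(x_d \mid \bfx_{<d}) = \mathcal{E}_d(\bfx_{<d}, x_d) - \log Z_d(\bfx_{<d})$, and differentiating in $x_d$ kills the second term and recovers $\hat s_d$. For $\Psi \circ \Phi = \mathrm{id}$, start from $q$ and take the canonical antiderivative $\mathcal{E}_d = \log q(\cdot \mid \bfx_{<d})$ as above; then $Z_d \equiv 1$, so $q_d(x_d \mid \bfx_{<d}) = q(x_d \mid \bfx_{<d})$, and the chain-rule factorization $q(\bfx) = \prod_d q(x_d \mid \bfx_{<d})$ recovers the original joint.

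The only real subtlety, and the step I would flag as the main obstacle, is the well-definedness of $\Psi$ under the ambiguity in the antiderivative $\mathcal{E}_d$: one must argue that two antiderivatives of $\hat s_d$ (as functions of $x_d$ for each fixed $\bfx_{<d}$) can differ only by a function of $\bfx_{<d}$, which requires a mild regularity/measurability assumption so that the "constant of integration" depends measurably on $\bfx_{<d}$ and the product $\prod_d q_d$ is a jointly measurable density. Everything else is a direct bookkeeping argument from the definition of conditional density and the fundamental theorem of calculus applied coordinatewise.
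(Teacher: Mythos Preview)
Your proposal is correct and uses essentially the same ingredients as the paper: the map from an AR-CSM to a density is $\prod_d e^{\mathcal{E}_d}/Z_d$, and the map back takes $q$ to $\partial_{x_d}\log q(x_d\mid\bfx_{<d})$. The paper packages the argument as surjectivity plus injectivity of $f=\Psi$, whereas you build the explicit inverse $\Phi$ and check both compositions; these are equivalent proofs of bijectivity. Two small differences worth noting: (i) your verification of $\Phi\circ\Psi=\mathrm{id}$ is more direct than the paper's injectivity step, which proves via an induction (their Lemma~\ref{lemma:injective1}) that the factors $e^{\mathcal{E}_d}/Z_d$ coincide with the conditionals of the product---you implicitly use that each factor is already normalized in $x_d$, so integrating out the tail variables is immediate; and (ii) you explicitly address well-definedness of $\Psi$ under the choice of antiderivative $\mathcal{E}_d$, a point the paper leaves implicit. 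The only place to tighten your write-up is to state explicitly that the $d$-th conditional of $\Psi(\hat s)$ equals $q_d$ (because later factors integrate to $1$), so that differentiating $\log q_d$ really is computing $\Phi(\Psi(\hat s))_d$.
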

The key advantage of this representation is that the functions in Definition \ref{def:arcsm} are easy to parameterize (e.g., using neural networks) as the requirements 1 and 2 are typically easy to enforce. In contrast with typical autoregressive models, we do not require the functions in Definition \ref{def:arcsm} to be normalized. Importantly, Theorem~\ref{thm:bijection_family} does not hold for previous approaches that learn a single score function for the joint distribution~\cite{song2019sliced,song2019generative}, since the score model $s_\theta:\mathbb{R}^D \to \mathbb{R}^D$ in their case is not necessarily the gradient of any underlying joint density. 
In contrast, AR-CSM \emph{always} define a valid density through the mapping given by Theorem \ref{thm:bijection_family}.

In the following, we discuss how to use deep neural networks to parameterize autoregressive conditional score models (AR-CSM) defined in Definition \ref{def:arcsm}. To simplify notations, we hereafter use $\bfx$ to denote the arguments for $s_{\theta,d}$ and $s_{d}$ even when these functions depend on a subset of its dimensions.

\subsection{Neural AR-CSM models}
We propose to parameterize an AR-CSM based on existing autoregressive architectures for traditional (normalized) density models (\eg, PixelCNN++~\cite{salimans2017pixelcnn++}, MADE~\cite{germain2015made}). One important difference is that the output of standard autoregressive models at dimension $d$ depend only on $\bfx_{<d}$, yet we want the conditional score $s_{\theta, d}$ to also depend on $x_d$. 

To fill this gap, we use standard autoregressive models to parameterize a "context vector" $\bfc_d\in \mathbb{R}^{c}$ ($c$ is fixed among all dimensions) that depends only on $\bfx_{<d}$, and
then incorporate the dependency on $x_d$ by concatenating $\bfc_d$
and $x_d$ to get a $c+1$ dimensional vector $\bfh_d=[\textbf{c}_d, x_d]$. 
Next, we feed $\bfh_d$ into another neural network 
which outputs the scalar $s_{\theta, d} \in \mathbb{R}$ to model the conditional score. The network's parameters are shared across all dimensions similar to~\cite{nash2019autoregressive}. %
Finally, we can compute $\frac{\partial}{\partial x_d}s_{\theta, d}(\bfx)$ using automatic differentiation, and optimize the model directly with the CSM divergence.

Standard autoregressive models, such as PixelCNN++ and MADE, model the density with a prescribed probability density function (\eg, a Gaussian density) parameterized by functions of $\bfh_d$. In contrast, we remove the normalizing constraints of these density functions and therefore able to capture stronger correlations among dimensions with more expressive architectures.

\subsection{Inference and learning}
To sample from an AR-CSM model, we use one dimensional Langevin dynamics to sample from each dimension in turn. Crucially, Langevin dynamics only need the score function to sample from a density~\cite{parisi1981correlation,grenander1994representations}. In our case, scores are simply the univariate derivatives given by the AR-CSM.
Specifically, we use $s_{\theta,1}(x_1)$ to obtain a sample $\overline{x}_1 \sim q_{\theta}(x_1)$, then use $s_{\theta,2}(\overline{x}_1,x_2)$ to sample from $\overline{x}_2 \sim q_{\theta}(x_2 \mid \overline{x}_1)$ and so forth.
Compared to Langevin dynamics performed directly on a high dimensional space, one dimensional Langevin dynamics can converge faster under certain regularity conditions~\cite{roberts1996exponential}. See \appref{app:inference_langevin_dynamics} for more details.

During training, we use the CSM divergence (see ~\eqnref{eq:csm:finitesample}) 
to train the model. To deal with data distributions supported on low-dimensional manifolds and the difficulty of score estimation in low data density regions, we use noise annealing similar to \cite{song2019generative} with slight modifications: Instead of performing noise annealing as a whole, we perform noise annealing on each dimension individually. 
More details can be found in \appref{app:ar-csm}.

\section{Density estimation with AR-CSM}
In this section, we first compare the optimization performance of CSM with two other variants of score matching: Denoising Score Matching (DSM)~\cite{vincent2011connection} and Sliced Score Matching (SSM)~\cite{song2019sliced}, and compare the training efficiency of CSM with Score Matching (SM)~\cite{hyvarinen2005estimation}. Our results show that CSM is more stable to optimize and more scalable to high dimensional data compared to the previous score matching methods.
We then perform density estimation on 2-d synthetic datasets (see \appref{app:toy_dataset}) and three commonly used image datasets: MNIST, CIFAR-10~\cite{krizhevsky2009learning} and CelebA~\cite{liu2015deep}. We further show that our method can also be applied to image denoising and anomaly detection, illustrating broad applicability of our method.
\subsection{Comparison with other score matching methods}

\paragraph{Setup} To illustrate the scalability of CSM, we consider a simple setup of learning Gaussian distributions. 
We train an AR-CSM model with CSM and the other score matching methods on a fully connected network with 3 hidden layers.
We use comparable number of parameters for all the methods to ensure fair comparison.

\begin{figure}[htbp]
    \centering
    \begin{subfigure}[b]{0.3\textwidth}
        \includegraphics[width=\textwidth]{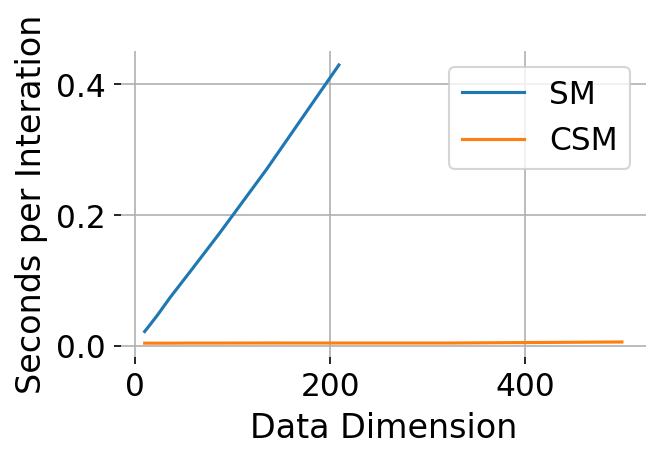}
        \caption{Training time per iteration.}
        \label{fig:training-efficiency}
    \end{subfigure}
    ~
    \begin{subfigure}[b]{0.3\textwidth}
        \includegraphics[width=\textwidth]{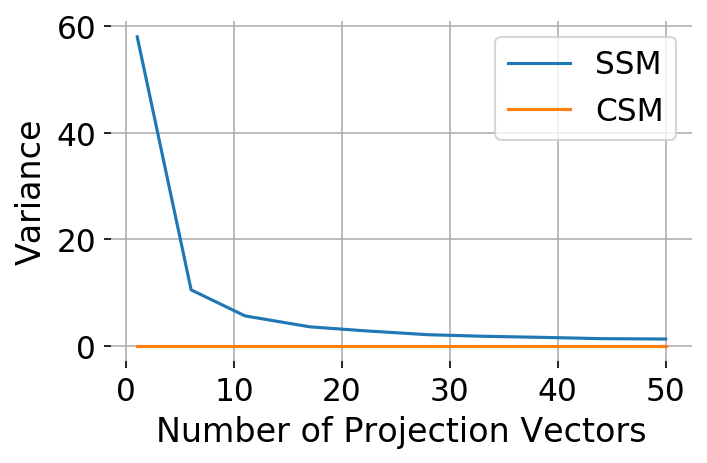}
        \caption{Variance comparison.}
        \label{fig:ssm_var}
    \end{subfigure}
    \caption{Comparison with SSM and SM in terms of loss variance and computational efficiency.}
\end{figure}

\paragraph{CSM vs. SM} 
In \figref{fig:training-efficiency}, we show the time per iteration of CSM versus the original score matching (SM) method~\cite{hyvarinen2005estimation} on multivariate Gaussians with different data dimensionality.
We find that the training speed of SM degrades linearly as a function of the data dimensionality. Moreover, the memory required grows rapidly \textit{w.r.t} the data dimension, which triggers memory error on 12 GB TITAN Xp GPU when the data dimension is approximately $200$. On the other hand, for CSM, the time required stays stable as the data dimension increases due to parallelism, and no memory errors occurred throughout the experiments. As expected, traditional score matching (SM) does not scale as well as CSM for high dimensional data. Similar results on SM were also reported in \cite{song2019sliced}.

\paragraph{CSM vs. SSM}
We compare CSM with Sliced Score Matching (SSM)~\cite{song2019sliced}, a recently proposed score matching variant, on learning a representative Gaussian $\mathcal{N}(0, 0.1^2 I)$ of dimension $100$ in  \figref{fig:sm_comparision_loss} (2 rightmost panels). While CSM converges rapidly, SSM does not converge even after 20k iterations due to the large variance of random projections. We compare the variance of the two objectives in \figref{fig:ssm_var}. In such a high-dimensional setting, SSM would require a large number of projection vectors for variance reduction, which requires extra computation and could be prohibitively expensive in practice. By contrast, CSM is a deterministic objective function that is more stable to optimize.
This again suggests that CSM might be more suitable to be used in high-dimensional data settings compared to SSM.

\paragraph{CSM vs. DSM}
Denoising score matching (DSM)~\cite{vincent2011connection} is perhaps the most scalable score matching alternative available, and has been applied to high dimensional score matching problems~\cite{song2019generative}. However, DSM estimates the score of the data distribution after it has been convolved with Gaussian noise with variance $\sigma^2I$. In \figref{fig:sm_comparision_loss}, we use various noise levels $\sigma$ for DSM, and compare the performance of CSM with that of DSM. We observe that although DSM shows reasonable performance when $\sigma$ is sufficiently large, the training can fail to converge for small $\sigma$. In other words, for DSM, there exists a tradeoff between optimization performance and the bias introduced due to noise perturbation for the data. CSM on the other hand does not suffer from this problem, and converges faster than DSM.

\paragraph{Likelihood comparison}
To better compare density estimation performance of DSM, SSM and CSM, we train a MADE~\cite{germain2015made} model with tractable likelihoods on MNIST, a more challenging data distribution, using the three variants of score matching objectives. We report the negative log-likelihoods in \figref{fig:mle_loss}. The loss curves in \figref{fig:mle_loss} align well with our previous discussion. For DSM, a smaller $\sigma$ introduces less bias, but also makes training slower to converge. For SSM, training convergence can be handicapped by the large variance due to random projections. In contrast, CSM can converge quickly without these difficulties. This clearly demonstrates the efficacy of CSM over the other score matching methods for density estimation.

\begin{figure}[t]
    \centering
    \begin{subfigure}[b]{0.19\textwidth}
        \includegraphics[width=\textwidth]{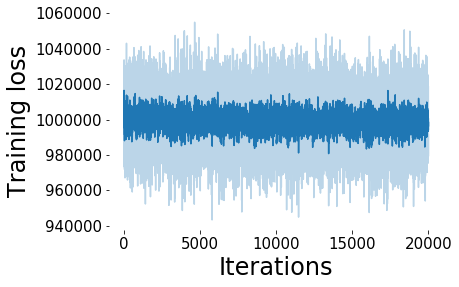}
        \caption*{DSM ($\sigma=0.01$)}
        \label{fig:DSM_variance_loss_0.01}
    \end{subfigure}
    \begin{subfigure}[b]{0.19\textwidth}
        \includegraphics[width=\textwidth]{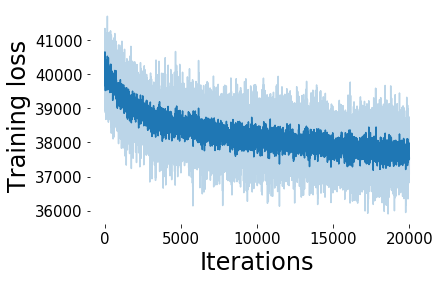}
        \caption*{DSM ($\sigma=0.05$)}
        \label{fig:DSM_variance_loss_0.05}
    \end{subfigure}
     \begin{subfigure}[b]{0.19\textwidth}
        \includegraphics[width=\textwidth]{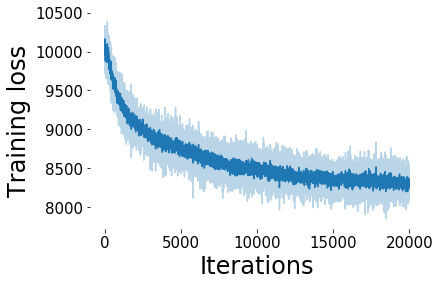}
        \caption*{DSM ($\sigma=0.1$)}
        \label{fig:DSM_variance_loss_0.1}
    \end{subfigure}
    \begin{subfigure}[b]{0.19\textwidth}
        \includegraphics[width=\textwidth]{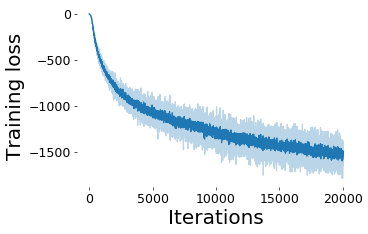}
        \caption*{SSM ($\sigma=0$)}
        \label{fig:SSM_variance_loss}
    \end{subfigure}
    \begin{subfigure}[b]{0.19\textwidth}
        \includegraphics[width=\textwidth]{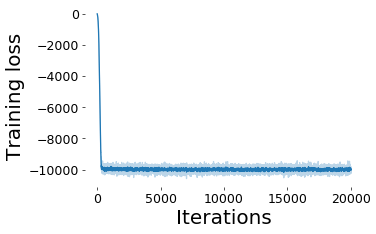}
        \caption*{CSM ($\sigma=0$)}
        \label{fig:AR-SM_variance_loss}
    \end{subfigure}
    \caption{Training losses for DSM, SSM and CSM on $100$-d Gaussian distribution $\mathcal{N}(0, 0.1^2I)$. Note the vertical axes are different across methods as they optimize different losses.}
    \label{fig:sm_comparision_loss}
\end{figure}

\begin{figure}[t]
    \centering
    \begin{subfigure}[b]{0.23\textwidth}
    \label{fig:mle_loss}
        \adjincludegraphics[width=\textwidth, trim={{.75\width} {.0\width} {.0\width} {.1\height}}, clip=true, ]{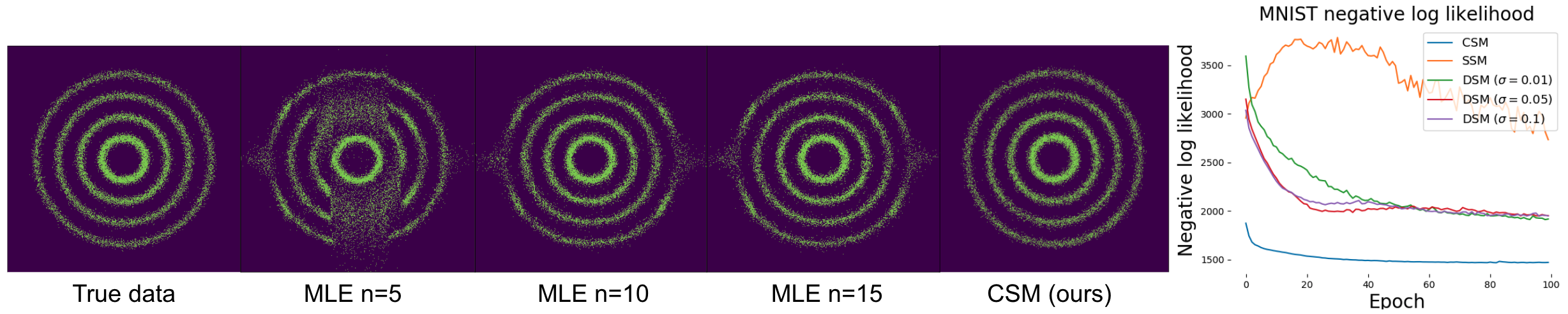}
        \caption{MNIST negative log-likelihoods}
        \label{fig:mle_loss}
    \end{subfigure}
    ~
    \begin{subfigure}[b]{0.74\textwidth}
        \adjincludegraphics[width=\textwidth, trim={{.0\width} {.0\width} {.25\width} {.1\height}}, clip=true, ]{img/toy-csm.png}
        \caption{2-d synthetic dataset samples from MADE MLE baselines with $n$ mixture of logistics and an AR-CSM model trained by CSM.}
        \label{fig:2d_sm_samples}
    \end{subfigure}
    \caption{Negative log-likelihoods on MNIST and samples on a 2-d synthetic dataset.}
\end{figure}

\subsection{Learning 2-d synthetic data distributions with AR-CSM}
In this section, we focus on a 2-d synthetic data distribution (see \figref{fig:2d_sm_samples}). We compare the sample quality of an autoregressive model trained by maximum likelihood estimation (MLE) and an AR-CSM model trained by CSM. We use a MADE model with $n$ mixture of logistic components for the MLE baseline experiments. We also use a MADE model as the autoregressive architecture for the AR-CSM model. 
To show the effectiveness of our approach, we use strictly fewer parameters for the AR-CSM model than the baseline MLE model. Even with fewer parameters, the AR-CSM model trained with CSM is still able to generate better samples than the MLE baseline (see \figref{fig:2d_sm_samples}).

\subsection{Learning high dimensional distributions over images with AR-CSM}
In this section, we show that our method is also capable of modeling natural images. We focus on three image datasets, namely MNIST, CIFAR-10, and CelebA.

\textbf{Setup}\; 
We select two existing autoregressive models --- MADE~\cite{germain2015made} and PixelCNN++~\cite{salimans2017pixelcnn++}, as the autoregressive architectures for AR-CSM. For all the experiments, we use a shallow fully connected network to transform the context vectors to the conditional scores for AR-CSM. 
Additional details can be found in \appref{app:ar-csm}. 

\begin{figure}[!ht]
    \centering
    \begin{subfigure}[b]{0.21\linewidth}
          \adjincludegraphics[width=\textwidth, trim={{.2\width} {.2\width} {.2\width} {.3\width}}, clip=true, ]{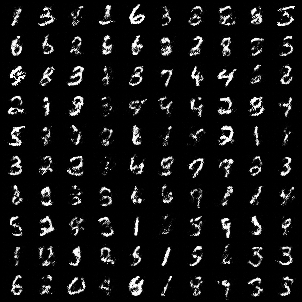}
    \end{subfigure}
    \begin{subfigure}[b]{0.21\linewidth}
         \adjincludegraphics[width=\textwidth, trim={{.2\width} {.2\width} {.2\width} {.3\width}}, clip=true, ]{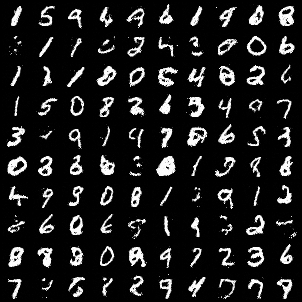}
    \end{subfigure}
    \begin{subfigure}[b]{0.28\linewidth}
         \adjincludegraphics[width=\textwidth, trim={{.1\width} {.2\width} {.1\width} {.3\width}}, clip=true, ]{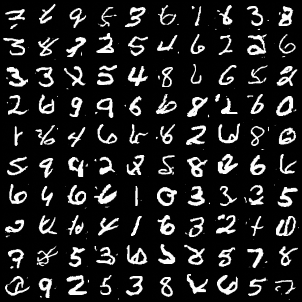}
    \end{subfigure}
    \begin{subfigure}[b]{0.28\linewidth}
          \adjincludegraphics[width=\textwidth, trim={{.1\width} {.2\width} {.1\width} {.3\width}}, clip=true, ]{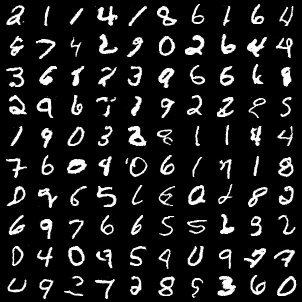}
    \end{subfigure}
    \hfill
    \begin{subfigure}[b]{0.21\linewidth}
          \adjincludegraphics[width=\textwidth, trim={{.2\width} {.2\width} {.2\width} {.3\width}}, clip=true, ]{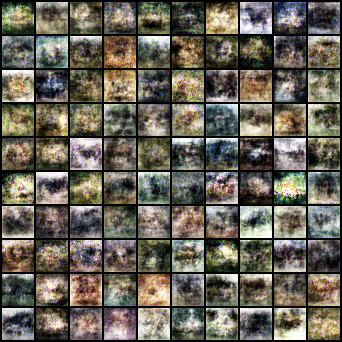}
    \caption*{MADE MLE}
    \end{subfigure}
    \begin{subfigure}[b]{0.21\linewidth}
         \adjincludegraphics[width=\textwidth, trim={{.2\width} {.2\width} {.2\width} {.3\width}}, clip=true, ]{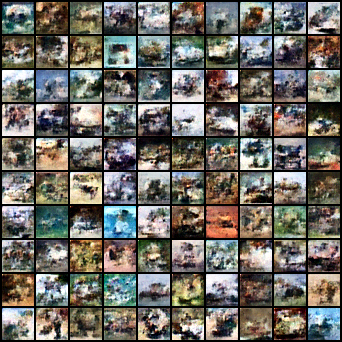}
    \caption*{MADE CSM}
    \end{subfigure}
    \begin{subfigure}[b]{0.28\linewidth}
         \adjincludegraphics[width=\textwidth, trim={{.1\width} {.2\width} {.1\width} {.3\width}}, clip=true, ]{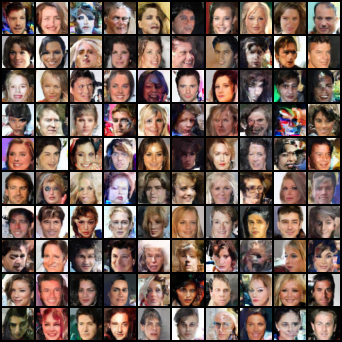}
    \caption*{PixelCNN++ MLE}
    \end{subfigure}
    \begin{subfigure}[b]{0.28\linewidth}
          \adjincludegraphics[width=\textwidth, trim={{.1\width} {.2\width} {.1\width} {.3\width}}, clip=true, ]{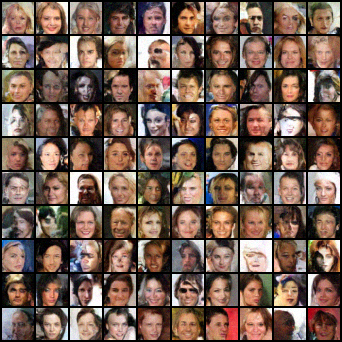}
    \caption*{PixelCNN++ CSM}
    \end{subfigure}
   \caption{Samples from MADE and PixelCNN++ using MLE and CSM.}
\label{fig:direct_samples}
\end{figure}

\textbf{Results}\; 
We compare the samples from AR-CSM with the ones from MADE and PixelCNN++ with similar autoregressive architectures but trained via maximum likelihood estimation. Our AR-CSM models have comparable number of parameters as the maximum-likelihood counterparts. We observe that the MADE model trained by CSM is able to generate sharper and higher quality samples than its maximum-likelihood counterpart using Gaussian densities (see \figref{fig:direct_samples}).
For PixelCNN++, we observe more digit-like samples on MNIST, and less shifted colors on CIFAR-10 and CelebA than its maximum-likelihood counterpart using mixtures of logistics (see \figref{fig:direct_samples}).
We provide more samples in \appref{app:ar-csm}.

\subsection{Image denoising with AR-CSM}
\begin{figure}[!ht]
    \centering
    \begin{subfigure}[b]{0.24\linewidth}
         \adjincludegraphics[width=\textwidth, trim={{.1\width} {.3\width} {.1\width} {.3\width}}, clip=true, ]{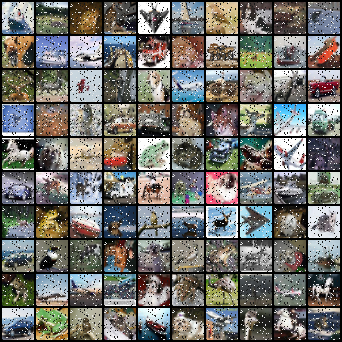}
    \caption*{S\&P Noise}
    \end{subfigure}
    \begin{subfigure}[b]{0.24\linewidth}
         \adjincludegraphics[width=\textwidth, trim={{.1\width} {.3\width} {.1\width} {.3\width}}, clip=true, ]{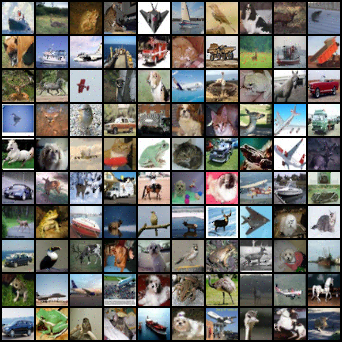}
    \caption*{Denoised}
    \end{subfigure}
    \begin{subfigure}[b]{0.24\linewidth}
         \adjincludegraphics[width=\textwidth, trim={{.1\width} {.3\width} {.1\width} {.3\width}}, clip=true, ]{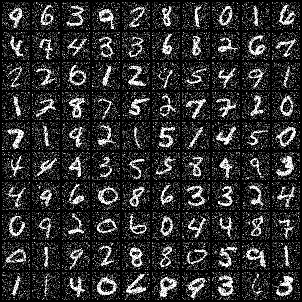}
    \caption*{Gaussian Noise}
    \end{subfigure}
    \begin{subfigure}[b]{0.24\linewidth}
         \adjincludegraphics[width=\textwidth, trim={{.1\width} {.3\width} {.1\width} {.3\width}}, clip=true, ]{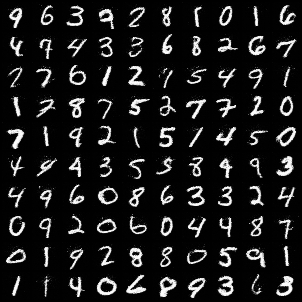}
    \caption*{Denoised}
    \end{subfigure}
   \caption{Salt and pepper denoising on CIFAR-10. Autoregressive single-step denoising on MNIST. %
   }
    \label{fig:denoise}
\end{figure}

Besides image generation, AR-CSM can also be used for image denoising. In \figref{fig:denoise}, we apply $10\%$ "Salt and Pepper" noise to the images in CIFAR-10 test set and apply Langevin dynamics sampling to restore the images. We also show that AR-CSM can be used for single-step denoising~\cite{saremi2018deep,vincent2011connection} and report the denoising results for MNIST, with noise level $\sigma=0.6$ in the rescaled space in \figref{fig:denoise}. These results qualitatively demonstrate the effectiveness of AR-CSM for image denoising, showing that our models are sufficiently expressive to capture complex distributions and solve difficult tasks.

\subsection{Out-of-distribution detection with AR-CSM}
\begin{wrapfigure}[9]{O}{7.0cm}
\vspace{-1em}
\begin{center}
\begin{adjustbox}{max width=1.0\linewidth}
    \begin{tabular}{p{2.2cm} c c c c}
        \toprule
        Model & PixelCNN++ & GLOW & EBM &AR-CSM(Ours)\\
        \midrule
        SVHN &0.32 &0.24 & 0.63 &\textbf{0.68}\\
        Const Uniform &0.0 &0.0 &0.30 &\textbf{0.57}\\
        Uniform &\textbf{1.0} &\textbf{1.0} & \textbf{1.0} &0.95\\
        Average &0.44  &0.41  &0.64  &\textbf{0.73}\\
        \bottomrule
    \end{tabular} 
\end{adjustbox}
\end{center}
\captionof{table}{AUROC scores for models trained on CIFAR-10.}
\label{tab:auroc_table}
\end{wrapfigure}

We show that the AR-CSM model can also be used for out-of-distribution (OOD) detection. In this task, the generative model is required to produce a statistic (\eg, likelihood, energy) such that the outputs of in-distribution examples can be distinguished from those of the out-of-distribution examples. We find that $h_{\theta}(\bfx)\triangleq  \sum_{d=1}^{D}s_{\theta, d}(\bfx)$ is an effective statistic for OOD. In \tabref{tab:auroc_table}, we compare the Area Under the Receiver-Operating Curve (AUROC) scores obtained by AR-CSM using $h_{\theta}(\bfx)$ with the ones obtained by PixelCNN++~\cite{salimans2017pixelcnn++}, Glow~\cite{kingma2018glow} and EBM~\cite{du2019implicit} using relative log likelihoods. We use SVHN, constant uniform and uniform as OOD distributions following ~\cite{du2019implicit}.
We observe that our method can perform comparably or better than existing generative models.

\section {VAE training with implicit encoders and CSM}
In this section, we show that CSM can also be used to improve variational inference with implicit distributions~\cite{huszar2017variational}.
Given a latent variable model $p_{\theta}(\bfx, \bfz)$, where $\bfx$ is the observed variable and $\bfz\in \mathbb{R}^{D}$ is the latent variable, a Variational Auto-Encoder (VAE)~\cite{kingma2013auto} contains an encoder $q_{\phi}(\bfz | \bfx)$ and a decoder $p_{\theta}(\bfx|\bfz)$ that are jointly trained by maximizing the evidence lower bound (ELBO)
\begin{align}
    \mathbb{E}_{p_{data}(\bfx)}[\mathbb{E}_{q_{\phi}(\bfz|\bfx)}\log p_{\theta}(\bfx|\bfz)p(\bfz)-\mathbb{E}_{q_{\phi}(\bfz|\bfx)}\log q_{\phi}(\bfz|\bfx)],
\label{eq:elbo}
\end{align} 
Typically,  $q_{\phi}(\bfz|\bfx)$ is chosen to be a simple \emph{explicit} distribution such that the entropy term in Equation~(\ref{eq:elbo}),
$H(q_{\phi}(\cdot|\bfx))\triangleq-\mathbb{E}_{q_{\phi}(\bfz|\bfx)}[\log q_{\phi}(\bfz|\bfx)]$,
is tractable. To increase model flexibility, we can parameterize the encoder using implicit distributions---distributions that can be sampled tractably but do not have tractable densities (\eg, the generator of a GAN~\cite{goodfellow2014generative}). 
The challenge is that evaluating $H(q_{\phi}(\cdot|\bfx)) $ and its gradient  $\nabla_{\phi}H(q_{\phi}(\cdot|\bfx)))$ becomes intractable.

Suppose $z_d\sim q_{\phi}(z_d| \bfz_{<d}, \bfx)$ can be reparameterized as $g_{\phi, d}(\bfe_{\le d},\bfx)$, where $g_{\phi, d}$ is a deterministic mapping and $\bfe$ is a $D$ dimensional random variable. 
We can write the gradient of the entropy with respect to $\phi$ as
\begin{align*}
    \nabla_{\phi}H(q_\phi(\cdot|\bfx)))
    &=-\sum_{d=1}^{D}\mathbb{E}_{p(\bfe_{<d})}\mathbb{E}_{p(\bfe_d)}\bigg[\frac{\partial}{\partial z_d}\log q_{\phi}(z_d|\bfz_{<d}, \bfx)|_{z_d=g_{\phi, d}(\bfe_{\le d}, \bfx)}\nabla_{\phi}g_{\phi, d}(\bfe_{\le d}, \bfx)\bigg],
\end{align*}
where $\nabla_{\phi}g_{\phi, d}(\bfe_{\le d}, \bfx)$ is usually easy to compute and $\frac{\partial}{\partial z_d}\log q_{\phi}(z_d|\bfz_{<d}, \bfx)$ can be approximated by score estimation using CSM. We provide more details in \appref{app:vae_extra}.

\paragraph{Setup}
We train VAEs using the proposed method on two image datasets -- MNIST and CelebA. We follow the setup in \cite{song2019sliced} (see \appref{app:vae_setup}) and compare our method with ELBO, and three other methods, namely SSM~\cite{song2019sliced}, Stein~\cite{stein1981estimation}, and Spectral~\cite{shi2018spectral}, that can be used to train implicit encoders \cite{song2019sliced}. Since SSM can also be used to train an AR-CSM model, we denote the AR-CSM model trained with SSM as SSM-AR.
Following the settings in \cite{song2019sliced}, we report the likelihoods estimated by AIS~\cite{neal2001annealed} for MNIST, and FID scores~\cite{heusel2017gans} for CelebA.
We use the same decoder for all the methods, and encoders sharing similar architectures with slight yet necessary modifications. 
We provide more details in \appref{app:vae_extra}. 

\paragraph{Results}
We provide negative log-likelihoods (estimated by AIS) on MNIST and the FID scores on CelebA in \tabref{tab:vae_table}. We observe that CSM is able to marginally outperform other methods in terms of the metrics we considered. 
We provide VAE samples for our method in \figref{fig:csm_vae_samples}. Samples for the other methods can be found in \appref{app:vae_samples}.

\begin{table}
\begin{minipage}[t]{0.43\linewidth}
\vspace{0pt}
\begin{center}
 \begin{adjustbox}{max width=0.9\linewidth}
    \begin{tabular}{c|cc||c}
    	\Xhline{3\arrayrulewidth} \bigstrut
    	& \multicolumn{2}{c||}{MNIST (AIS)} & \multicolumn{1}{c}{CelebA (FID)}\\
    	\Xhline{1\arrayrulewidth}\bigstrut
    	Latent Dim &  8 & 16 & 32\\
    	\Xhline{1\arrayrulewidth}\bigstrut
        ELBO &96.74 & 91.82  &66.31\\
        Stein &96.90 &88.86 &108.84\\
        Spectral &96.85 &88.76 &121.51\\
        SSM &95.61 &88.44 &62.50\\
        SSM-AR &95.85 &88.98 &66.88\\
        CSM (Ours) &\textbf{95.02}  &\textbf{88.42} &\textbf{62.20}\\
        \Xhline{3\arrayrulewidth}
\end{tabular}
\end{adjustbox}
\end{center}
\captionof{table}{VAE results on MNIST and CelebA.}
\label{tab:vae_table}
\end{minipage}\hfill
\begin{minipage}[t]{0.55\linewidth}
\begin{center}
    \centering
    \vspace{0pt}
    \begin{subfigure}{0.48\linewidth}
        \adjincludegraphics[width=\textwidth,trim={{.2\width} {.4\width} 0 0}, clip=true]{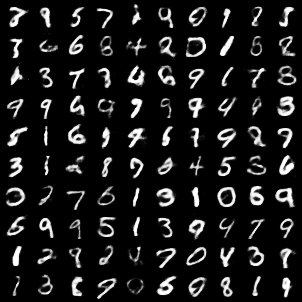} %
    \end{subfigure}
    \begin{subfigure}{0.48\linewidth}
        \adjincludegraphics[width=\textwidth,trim={{.2\width} {.4\width} 0 0}, clip=true]{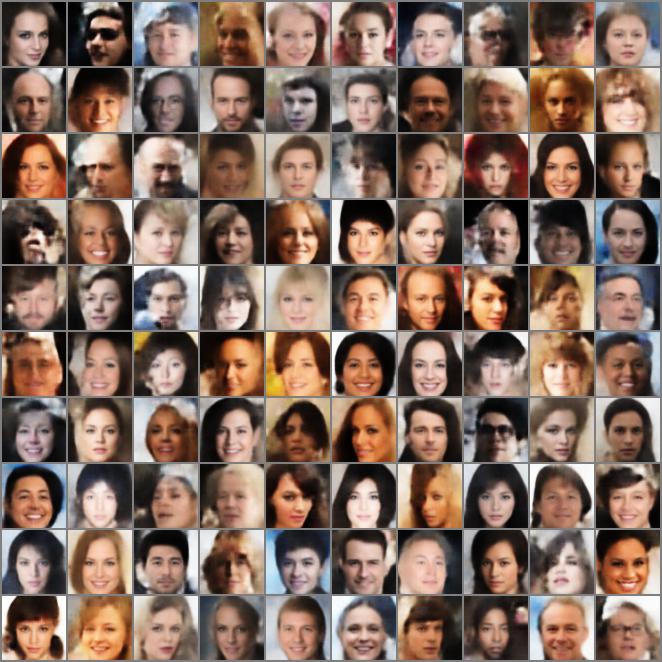}
    \end{subfigure}
\end{center}
\captionof{figure}{CSM VAE MNIST and CelebA samples.
}
\label{fig:csm_vae_samples}
\end{minipage}
\end{table}

\section{Related work}
\label{sec:related_work}
 Likelihood-based deep generative models (\eg, flow models, autoregressive models) have been widely used for modeling high dimensional data distributions. Although such models have achieved promising results, they tend to have extra constraints which could limit the model performance. For instance, flow ~\cite{dinh2016density,kingma2018glow} and autoregressive~\cite{van2016conditional,oord2016wavenet} models require  normalized densities, while variational auto-encoders (VAE)~\cite{kingma2013auto} need to use surrogate losses. 

 Unnormalized statistical models allow one to use more flexible networks, but require new training strategies.
 Several approaches have been proposed to train unnormalized statistical models, all with certain types of limitations. 
 Ref.~\cite{du2019implicit} proposes to use Langevin dynamics together with a sample replay buffer to train an energy based model, which requires more iterations over a deep neural network for sampling during training. 
Ref.~\cite{yu2020training} proposes a variational framework to train energy-based models by minimizing general $f$-divergences, which also requires expensive Langevin dynamics to obtain samples during training.
Ref.~\cite{nash2019autoregressive} approximates the unnormalized density using importance sampling, which introduces bias during optimization and requires extra computation during training. 
There are other approaches that focus on modeling the log-likelihood gradients (scores) of the distributions. For instance, 
score matching (SM)~\cite{hyvarinen2005estimation} trains an unnormalized model by minimizing Fisher divergence, which introduces a new term that is expensive to compute for high dimensional data. Denoising score matching~\cite{vincent2011connection} is a variant of score matching that is fast to train. However, the performance of denoising score matching can be very sensitive to the perturbed noise distribution and heuristics have to be used to select the noise level in practice. Sliced score matching~\cite{song2019sliced} approximates SM by projecting the scores onto random vectors. Although it can be used to train high dimensional data much more efficiently than SM, it provides a trade-off between computational complexity and variance introduced while approximating the SM objective. By contrast, CSM is a deterministic objective function that is efficient and stable to optimize.

\section{Conclusion}
We propose a divergence between distributions, named \mname{} (CSM), which depends only on the derivatives of univariate log-conditionals (scores) of the model. Based on CSM divergence, we introduce a family of models dubbed AR-CSM, which allows us to 
expand the capacity of existing autoregressive likelihood-based models by removing the normalizing constraints of conditional distributions.
Our experimental results demonstrate good performance on density estimation, data generation, image denoising, anomaly detection and training VAEs with implicit encoders. Despite the empirical success of AR-CSM, sampling from the model is relatively slow since each variable has to be sampled sequentially according to some order. It would be interesting to investigate methods that accelerate the sampling procedure in AR-CSMs, or consider more efficient variable orders that could be learned from data.

\section*{Broader Impact}
The main contribution of this paper is theoretical---a new divergence between distributions and a related class of generative models. We do not expect any direct impact on society. The models we trained using our approach and used in the experiments have been learned using classic dataset and have capabilities substantially similar to existing models (GANs, autoregressive models, flow models): generating images, anomaly detection, denoising. As with other technologies, these capabilities can have both positive and negative impact, depending on their use. For example, anomaly detection can be used to increase safety, but also possibly for surveillance. Similarly, generating images can be used to enable new art but also in malicious ways.

\section*{Acknowledgments and Disclosure of Funding}
This research was supported by TRI, Amazon AWS, NSF (\#1651565, \#1522054, \#1733686), ONR (N00014-19-1-2145), AFOSR (FA9550-19-1-0024), and FLI.
\bibliography{bib}
\newpage
\appendix
\section{Proofs}
\subsection{Regularity conditions}
The following regularity conditions are needed for identifiability and integration by parts.

We assume that for every $\bfx_{<d}$ and for any $\theta$
 \begin{enumerate}
     \item   $\frac{\partial}{\partial x_d}\log p(x_d|\bfx_{<d})$ and  $\frac{\partial}{\partial x_d} \log q_\theta(x_d | \bfx_{<d})$ are continuously differentiable over $\mathbb{R}$.
     \item $\mathbb{E}_{x_d \sim  p(x_d|\bfx_{<d})}[\left(\frac{\partial \log p(x_d|\bfx_{<d})}{\partial x_d}\right)^2]$ and $\mathbb{E}_{ x_d \sim p(x_d|\bfx_{<d})}[\left(\frac{\partial \log q_\theta(x_d|\bfx_{<d})}{\partial x_d}\right)^2]$ are finite.
     \item  $\lim_{|x_d| \rightarrow \infty} p(x_d|\bfx_{<d}) \frac{\partial \log q_\theta(x_d|\bfx_{<d})}{\partial x_d} = 0$.
 \end{enumerate}
 
\subsection[Proofs]{Proof of Theorem~\ref{thm:csmd} (See page~\pageref{thm:csmd})}
\csmd*
\begin{proof}
It is known that the Fisher divergence
\begin{align}
    L(q_{\theta};p) =\frac{1}{2} \mathbb{E}_{p} \bigg[\|\nabla_\bfx \log p(\bfx) - \nabla_\bfx \log q_\theta(\bfx)\|_2^2\bigg]
\label{eq:score_matching_original}
\end{align}
is a strictly proper scoring rule, and $L(q_{\theta};p) \geq 0$ and vanishes if and only if $q_{\theta} = p$ almost everywhere~\cite{hyvarinen2005estimation}.

Recall
\begin{align}
    L_{CSM}(q_{\theta}; p) = \frac{1}{2}\sum_{d=1}^D \mathbb{E}_{p(\bfx_{< d})} \mathbb{E}_{p(x_d|\bfx_{<d})} \bigg[ (s_d(\bfx) - s_{\theta,d}(\bfx)) ^2\bigg]. 
\end{align}
which we can rewrite as
\begin{align}
    L_{CSM}(q_{\theta}; p) = \sum_{d=1}^D \mathbb{E}_{p(\bfx_{< d})} \bigg[ 
    L\left(q_{\theta}(x_d \mid \bfx_{< d});p(x_d \mid \bfx_{< d})\right)
    \bigg]. 
\end{align}

When $q_{\theta} = p$ almost everywhere, we have
\begin{align}
    q_{\theta}(\bfx_{\le d}) &= q_{\theta}(\bfx_{<d}) q_\theta (x_d \mid \bfx_{<d}) = p(\bfx_{\le d}) = p(\bfx_{<d}) p(x_d \mid \bfx_{<d}) = q_\theta(\bfx_{<d}) p (x_d \mid \bfx_{<d}) \quad a.e. \label{eqn:1}
\end{align}
Let $\mcal{A} \triangleq \{ \bfx_{<d} \mid q_{\theta}(\bfx_{<d}) > 0\}$. We first observe that when $\bfx_{<d} \in \mcal{A}$, Eq.~\eqref{eqn:1} implies that $q_\theta (x_d \mid \bfx_{<d}) = p(x_d \mid \bfx_{<d})~~ a.e$, and subsequently, $(s_d(\bfx) - s_{\theta,d}(\bfx))^2 = 0 ~~a.e$. Therefore
\begin{align*}
    L_{CSM}(q_{\theta}; p) &= \frac{1}{2}\sum_{d=1}^D \mathbb{E}_{p(\bfx_{< d})} \mathbb{E}_{p(x_d|\bfx_{<d})} \bigg[ (s_d(\bfx) - s_{\theta,d}(\bfx)) ^2\bigg]\\
    &= \frac{1}{2}\sum_{d=1}^D \mathbb{E}_{p(\bfx_{<d})}\bigg[\mbb{I}[\bfx_{<d} \in \mcal{A}]\mathbb{E}_{p(x_d|\bfx_{<d})} \bigg[ (s_d(\bfx) - s_{\theta,d}(\bfx)) ^2\bigg] \bigg]\\
    &= 0
\end{align*}

Now assume $L_{CSM}(q_{\theta}; p)=0$. Because $L \geq 0$, $\mathbb{E}_{p(\bfx_{< d})} \bigg[ 
    L\left(q_{\theta}(x_d \mid \bfx_{< d});p(x_d \mid \bfx_{< d})\right)
    \bigg] \geq 0$ which means every term in the sum must be zero
\begin{align*}
 \mathbb{E}_{p(\bfx_{< d})} \bigg[ 
    L\left(q_{\theta}(x_d \mid \bfx_{< d});p(x_d \mid \bfx_{< d})\right)
    \bigg] = 0 \ \ \forall d
\end{align*}
and $L\left(q_{\theta}(x_d \mid \bfx_{< d});p(x_d \mid \bfx_{< d})\right) = 0$ $p(\bfx_{< d})$-almost everywhere.  
Let's show that $q_{\theta}(\bfx_{\le d})=p(\bfx_{\le d})$ almost everywhere using induction. When $d=1$, $L\left(q_{\theta}(x_1);p(x_1)\right) = 0$ almost everywhere implies $q_{\theta}(x_1)=p(x_1)$ almost everywhere. Assume the hypothesis holds when $d=k$, that is $q_{\theta}(\bfx_{\le k})=p(\bfx_{\le k}))$ almost everywhere. Using the fact that $L\left(q_{\theta}(x_{k+1} \mid \bfx_{< k+1});p(x_{k+1} \mid \bfx_{< k+1})\right) = 0$ $p(\bfx_{< k+1})$-almost everywhere (\ie, $p(\bfx_{\le k})$-almost everywhere), we have $$q_{\theta}(x_{\le k+1})=q_{\theta}(x_{k+1}\mid \bfx_{<k+1})q_{\theta}(\bfx_{\le k})=p(x_{k+1}\mid \bfx_{<k+1})p(\bfx_{\le k})=p(\bfx_{\le k+1}) \quad a.e.$$ 
Thus, the hypothesis holds when $d=k+1$. By induction hypothesis, we have $q_{\theta}(\bfx_{\le d})=p(\bfx_{\le d}) \quad a.e.$ for any $d$. In particular, $$q_{\theta}(\bfx)=q_{\theta}(\bfx_{\le D})=p(\bfx_{\le D})=p(\bfx) \quad a.e.$$
\end{proof}

\subsection[Proofs]{Proof of Theorem~\ref{thm:ar-objective} (See page~\pageref{thm:ar-objective})}
\begin{customthm}{\ref{thm:ar-objective}}[Formal Statement]
 $L_{CSM}(\theta; p)=J_{CSM}(\theta; p)+C$ where $C$ is a constant does not depend on $\theta$.
 \end{customthm}
\begin{proof}
Recall
\begin{align}
    L_{CSM}(q_{\theta}; p) = \frac{1}{2}\sum_{d=1}^D \mathbb{E}_{p(\bfx_{< d})} \mathbb{E}_{p(x_d|\bfx_{<d})} \bigg[ (s_d(\bfx) - s_{\theta,d}(\bfx)) ^2\bigg]. 
\end{align}
which we can rewrite as
\begin{align}
    L_{CSM}(q_{\theta}; p) = \sum_{d=1}^D \mathbb{E}_{p(\bfx_{< d})} \bigg[ 
    L\left(q_{\theta}(x_d \mid \bfx_{< d});p(x_d \mid \bfx_{< d})\right)
    \bigg]. 
    \label{eq:csmsimp}
\end{align}
where $L(q_{\theta};p)$ is the Fisher divergence. 

Under the assumptions, we can use Theorem 1 from \cite{hyvarinen2005estimation} which shows that $L(q_\theta;p)=J(\theta;p)+C$, where $C$ is a constant independent of $\theta$ and $J(q_\theta; p)$ is defined as below:
\begin{align*}
    J(q_\theta;p) = \mathbb{E}_{p} \bigg[\frac{1}{2}\|\nabla_\bfx \log q_{\theta}(\bfx) \|_2^2+\text{tr} (\nabla^2_\bfx \log q_\theta(\bfx))\bigg],
\end{align*}
Substituting into \eqref{eq:csmsimp} we get
\begin{align}
    L_{CSM}(q_{\theta}; p) &=& \sum_{d=1}^D \mathbb{E}_{p(\bfx_{< d})} \bigg[ 
   J\left(q_{\theta}(x_d \mid \bfx_{< d});p(x_d \mid \bfx_{< d})\right) + C(\bfx_{< d})
    \bigg] \\
    &=& \sum_{d=1}^D 
    \mathbb{E}_{p(\bfx_{< d})} \bigg[ \mathbb{E}_{p(x_d|\bfx_{<d})} \bigg[  \frac{1}{2}s_{\theta,d}(\bfx)^2 +  \frac{\partial}{\partial x_d} s_{\theta,d}(\bfx)\bigg] + C(\bfx_{< d})
    \bigg] \\
        &=& \sum_{d=1}^D 
    \mathbb{E}_{p(\bfx_{< d})} \bigg[ \mathbb{E}_{p(x_d|\bfx_{<d})} \bigg[  \frac{1}{2}s_{\theta,d}(\bfx)^2 +  \frac{\partial}{\partial x_d} s_{\theta,d}(\bfx)\bigg]
    \bigg] + C
\end{align}
\end{proof}

\subsection[Proofs]{Proof of Theorem~\ref{thm:bijection_family}}
\label{proof:bijection_family}
\begin{proof}
Let $\mathcal{Q}$ be the set of joint distributions that satisfies the condition in Theorem \ref{thm:bijection_family}. Let $f$ be defined as:
\begin{align*}
    f:\text{AR-CSM}&\to \mathcal{Q}\\
    f: \hat{s}(\bfx)=(\hat{s}_1(x_1),...,\hat{s}_{D}(\bfx_{<D}, x_D)) &\mapsto q(\bfx):=\prod_{d=1}^{D}\frac{e^{\mathcal{E}_{d}(\bfx_{< d}, x_d)}}{Z_{d}(\bfx_{< d})}
\end{align*}
\textbf{Surjectivity}\\
Given $q\in \mathcal{Q}$, from the chain rule, we have 
\begin{equation}
    q(\bfx)=\prod_{d=1}^{D}q(x_d|\bfx_{<d}).
\end{equation}
By assumption, we have $\frac{\partial}{\partial x_d}\log q(x_d|\bfx_{<d})$ exists for all $d$.
Define  $\hat{s}_d(\bfx_{< d},x_d)=\frac{\partial}{\partial x_d}\log q(x_d|\bfx_{<d})$ for each $d$. We can check that $\mathcal{E}_d(\bfx_{< d}, x_d)=\log q(x_d|\bfx_{<d})+C$, where $C$ is a constant. We also have $Z_d(\bfx_{< d})=e^{C}\int q(x_d|\bfx_{<d})d x_d=e^{C}$ exists. Thus, $s(\bfx)\in \text{AR-CSM}$. On the other hand, we have
\begin{align}
    f(s(\bfx)) &=\prod_{d=1}^{D}\frac{e^{\mathcal{E}_d(\bfx_{< d}, x_d)}}{Z_d(\bfx_{< d})}\\
    &=\prod_{d=1}^{D}\frac{e^{C}\log q(x_d|\bfx_{<d})}{e^{C}}\\
    &=\prod_{d=1}^{D} \log q(x_d|\bfx_{<d})\\
    &=q(\bfx)
    \label{eq:injective-product}
\end{align}
Thus $s(\bfx)\in \text{AR-CSM}$ is a pre-image of $q(\bfx)$ and $f$ is surjective.\\

\noindent\textbf{Injectivity}\\
Given $q\in \mathcal{Q}$, assume there exist $\hat{s}_1(\bfx), \hat{s}_2(\bfx)\in \text{AR-CSM}$ such that $f(\hat{s}_1(\bfx))=f(\hat{s}_2(\bfx))=q(\bfx)$, we have 
\begin{align*}
    q(\bfx)&=f(\hat{s}_i(\bfx))\\
    \prod_{d=1}^{D}q(x_d|\bfx_{<d})&=\prod_{d=1}^{D}\frac{e^{\mathcal{E}_{i,d}(\bfx_{< d},x_d)}}{Z_{i,d}(\bfx_{< d})},
\end{align*}
where $i=1,2$. 
\begin{lemma}
For any $d=1,...,D$, we have $q(x_d|\bfx_{<d})=\frac{e^{\mathcal{E}_{i,d}(\bfx_{< d},x_d)}}{Z_{i,d}(\bfx_{< d})}$.
\label{lemma:injective1}
\end{lemma}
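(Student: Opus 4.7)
The plan is to exploit the fact that the factors on the right-hand side of the product identity are already normalized conditional densities. By construction $Z_{i,d}(\bfx_{<d})=\int e^{\mathcal{E}_{i,d}(\bfx_{<d},x_d)}\,dx_d$, so $\tilde{q}_{i,d}(x_d\mid\bfx_{<d})\triangleq e^{\mathcal{E}_{i,d}(\bfx_{<d},x_d)}/Z_{i,d}(\bfx_{<d})$ integrates to $1$ in $x_d$ for every fixed $\bfx_{<d}$. Both sides of
$$\prod_{d=1}^{D}q(x_d\mid\bfx_{<d})=\prod_{d=1}^{D}\tilde{q}_{i,d}(x_d\mid\bfx_{<d})$$
are therefore chain-rule factorizations of a joint density over $\mathbb{R}^D$, and the lemma reduces to the uniqueness of such a factorization with respect to a fixed variable ordering.

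I would make this precise by induction on $d$. For the base case $d=1$, integrate the joint identity against $x_2,\dots,x_D$; since every conditional on each side integrates to $1$ in its last argument, the two sides collapse to $q(x_1)=\tilde{q}_{i,1}(x_1)$. For the inductive step, assume $q(x_j\mid\bfx_{<j})=\tilde{q}_{i,j}(x_j\mid\bfx_{<j})$ for all $j\le k$. Integrating the joint identity over $x_{k+2},\dots,x_D$ gives
$$\prod_{j=1}^{k+1}q(x_j\mid\bfx_{<j})=\prod_{j=1}^{k+1}\tilde{q}_{i,j}(x_j\mid\bfx_{<j}).$$
The inductive hypothesis lets us cancel the first $k$ factors, yielding $q(x_{k+1}\mid\bfx_{\le k})=\tilde{q}_{i,k+1}(x_{k+1}\mid\bfx_{\le k})$ on the set where the common factor is positive.

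The only real technical care needed is measure-theoretic bookkeeping: the integrated identity holds almost everywhere in $\bfx_{<d}$, and the cancellation step requires the partial products to be nonzero. Both concerns dissolve under the full-support hypothesis of Theorem~\ref{thm:bijection_family} on $q$ together with the strict positivity of $\tilde{q}_{i,d}$ (an exponential over a finite normalizer). I do not anticipate a genuine obstacle here; the lemma is effectively just a rigorous statement that chain-rule factorizations against a fixed ordering are unique. Once Lemma~\ref{lemma:injective1} is in hand, the injectivity conclusion in Theorem~\ref{thm:bijection_family} is immediate: $\hat{s}_{i,d}(\bfx_{<d},x_d)=\partial_{x_d}\mathcal{E}_{i,d}(\bfx_{<d},x_d)=\partial_{x_d}\log q(x_d\mid\bfx_{<d})$, which does not depend on $i$, hence $\hat{s}_1=\hat{s}_2$.
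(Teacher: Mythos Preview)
Your proposal is correct and is essentially the same argument as the paper's: induction on $d$, integrating out the trailing variables $x_{k+2},\dots,x_D$ so that the product identity collapses to the first $k+1$ factors, and then using the full-support hypothesis on $q$ to cancel the common partial product. The only cosmetic difference is that you explicitly name the normalized factors $\tilde q_{i,d}$ and invoke the fact that each integrates to $1$ in its last argument, whereas the paper leaves this implicit in the ``integrate sequentially'' step.
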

\begin{proof}
Let's prove this argument using induction on $d$.\\
i) When $d=1$, integrate \eqref{eq:injective-product} \textit{w.r.t.} $x_D,...,x_2$ sequentially, we have
\begin{align*}
    \int...\int \prod_{d=1}^{D}q(x_d|\bfx_{<d})dx_D...dx_2&=\int...\int\prod_{d=1}^{D}\frac{e^{\mathcal{E}_{i,d}(\bfx_{< d},x_d)}}{Z_{i,d}(\bfx_{< d})}dx_D...dx_2\\
    q(x_1)&=\frac{e^{\mathcal{E}_{i,d}(x_1)}}{Z_{i,d}(x_1)}
\end{align*}
Thus, the condition holds when $d=1$.\\
ii) Assume the condition holds for any $d\le k$, that is $q(x_d|\bfx_{<d})=\frac{e^{\mathcal{E}_{i,d}(\bfx_{<d},x_d)}}{Z_{i,d}(\bfx_{<d})}$ for any $d\le k$. This implies
\begin{align*}
    q(x_1,...,x_k)&=\prod_{d=1}^{k}q(x_d|\bfx_{<d})=\prod_{d=1}^{k}\frac{e^{\mathcal{E}_{i,d}(\bfx_{<d},x_d)}}{Z_{i,d}(\bfx_{<d})}
\end{align*}
Similarly, integrating \eqref{eq:injective-product} \textit{w.r.t.} $x_D,...,x_{k+2}$ sequentially will give us
\begin{align*}
    q(x_1,...,x_{k+1})&=\prod_{d=1}^{k+1}\frac{e^{\mathcal{E}_{i,d}(\bfx_{<d},x_d)}}{Z_{i,d}(\bfx_{<d})}.
\end{align*}
Plugging in $q(x_1,...,x_k)=\prod_{d=1}^{k}\frac{e^{\mathcal{E}_{i,d}(\bfx_{<d},x_d)}}{Z_{i,d}(\bfx_{<d})}$ and use the fact that $q(x_1,...,x_k)\neq 0$ (since $q$ has support equals to the entire space by assumption), we obtain $q(x_{k+1}|\bfx_{<k+1})=\frac{e^{\mathcal{E}_{i,k+1}(\bfx_{<k+1}, x_{k+1})}}{Z_{i,k+1}(\bfx_{<k+1})}$. Thus, the hypothesis holds when $d=k+1$.\\
iii) By induction hypothesis, the condition holds for all $d$.
\end{proof}
From \lemref{lemma:injective1}, we have 
\begin{align*}
    q(x_d|\bfx_{<d})&=\frac{e^{\mathcal{E}_{i,d}(\bfx_{< d})}}{Z_{i,d}(\bfx_{< d})}\\
    \log q(x_d|\bfx_{<d})&=\mathcal{E}_{i,d}(\bfx_{< d},x_d) - \log Z_{i,d}(\bfx_{< d}).
\end{align*}
Taking the derivative \textit{w.r.t} $x_d$ on both sides, since $\log Z_{i,d}(\bfx_{< d},x_d)$ does not depend on $x_d$, we conclude that
\begin{align*}
    \frac{\partial}{\partial x_d}\log q(x_d|\bfx_{<d})&=\frac{\partial}{\partial x_d}\mathcal{E}_{i,d}(\bfx_{< d}, x_d)=\hat{s}_{i,d}(\bfx_{< d}, x_d),
\end{align*}
where $i=1,2$. This implies $\hat{s}_1(\bfx)=\hat{s}_2(\bfx)$, and $f$ is injective.
\end{proof}

\section{Data generation on 2D toy datasets}
\label{app:toy_dataset}
We perform density estimation on a couple two-dimensional synthetic distributions with various shapes and number of modes using our method. 
In \figref{fig:ar-sm-toy}, we visualize the samples drawn from our model. We notice that the trained AR-CSM model can fit multi-modal distributions well.
\begin{figure}[H]
\centering
\includegraphics[width=0.8\textwidth]{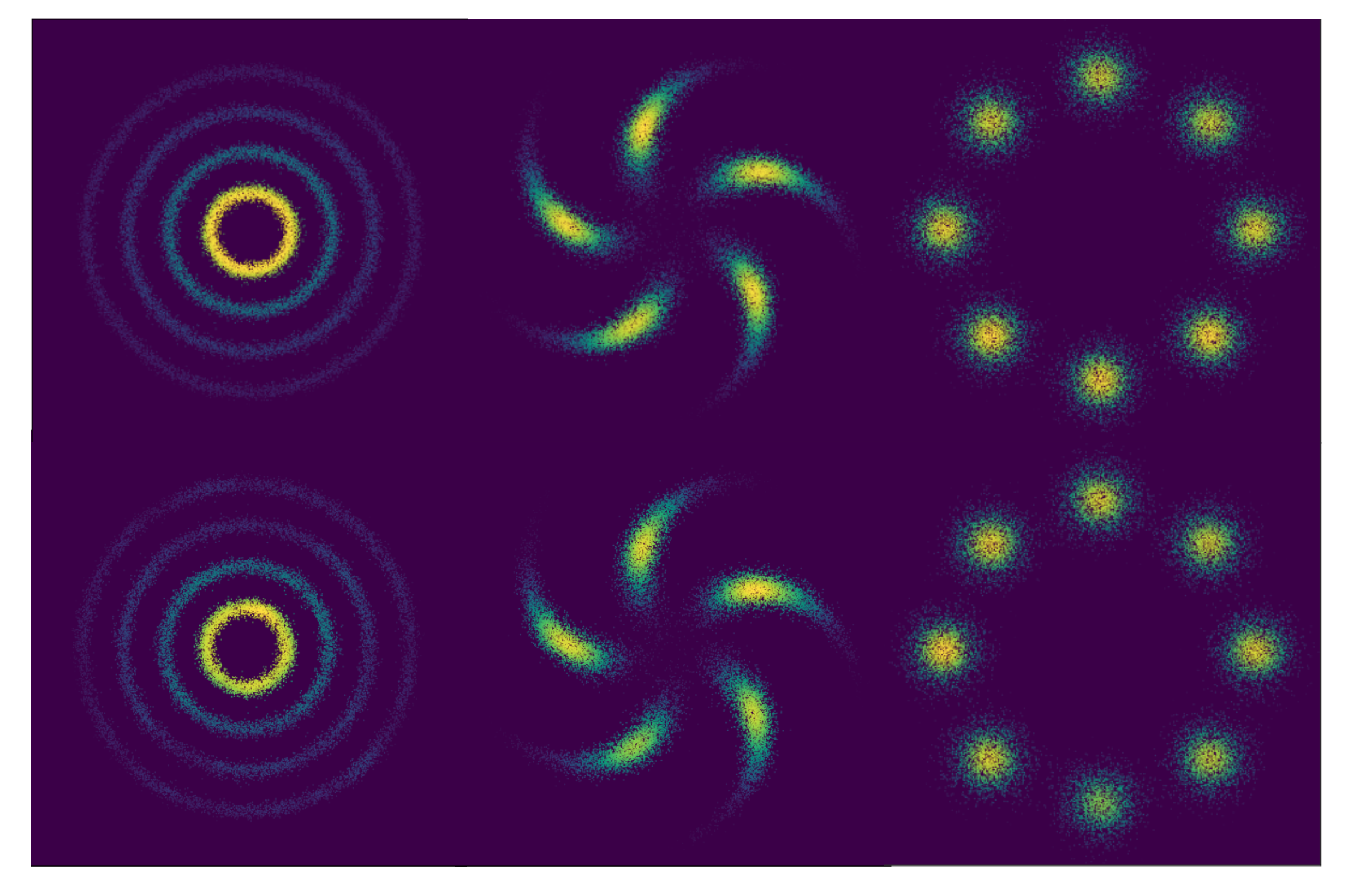}
\caption{Samples from 2D synthetic datasets. The first row: data distribution. The second row: samples from AR-CSM.}
\label{fig:ar-sm-toy}   
\end{figure}

\section{Additional details of AR-CSM experiments}
\label{app:ar-csm}
\subsection{More Samples}
\noindent\textbf{MADE MLE}
\begin{figure}[H]
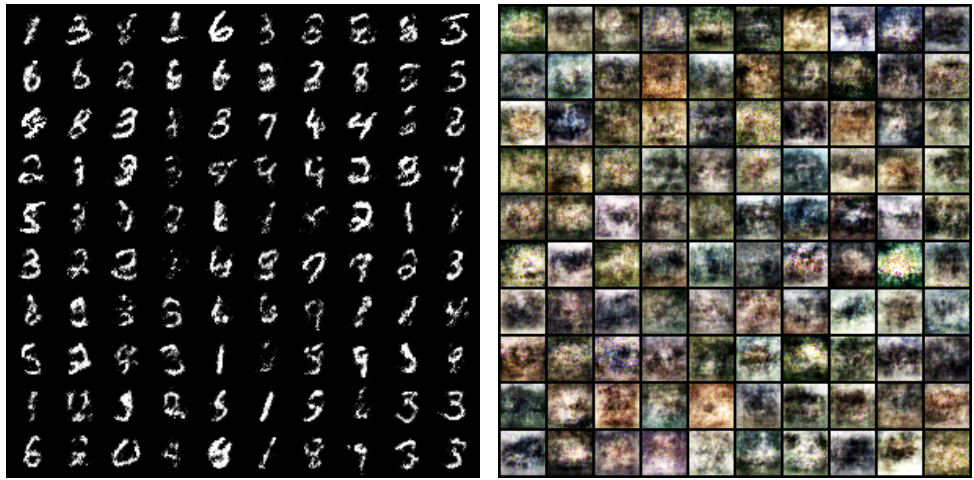

    \centering
    \begin{subfigure}[b]{0.45\textwidth}
        \includegraphics[width=\textwidth]{img/made_direct_samples_new-1.png}
        \caption{MNIST samples.}
    \end{subfigure}
    ~
    \begin{subfigure}[b]{0.45\textwidth}
        \includegraphics[width=\textwidth]{img/made_direct_samples_new.png}
        \caption{CIFAR-10 samples.}
    \end{subfigure}
    \caption{MADE MLE samples.}
\end{figure}

\noindent\textbf{MADE CSM}
\begin{figure}[H]
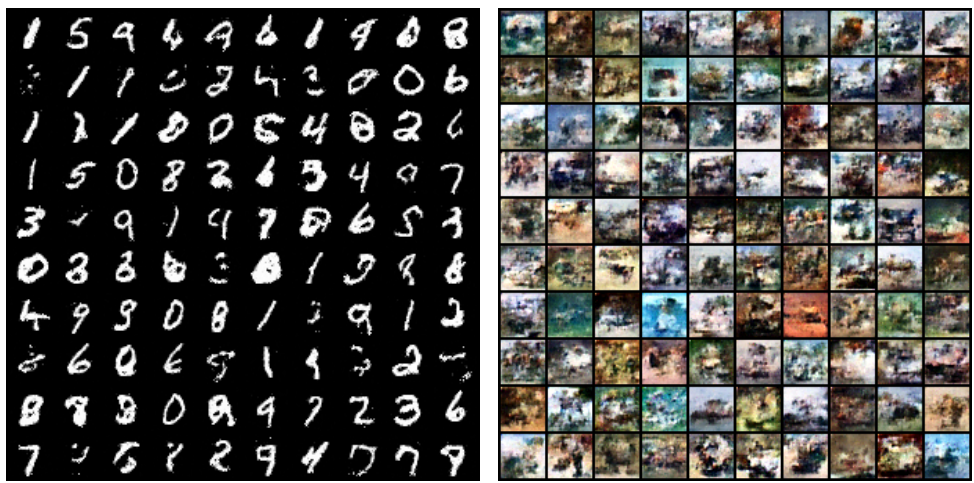

    \centering
    \begin{subfigure}[b]{0.45\textwidth}
        \includegraphics[width=\textwidth]{img/made_sm_mnist_anneal20_2e-05_0.001_-1_m_e.png}
        \caption{MNIST samples.}
    \end{subfigure}
    ~
    \begin{subfigure}[b]{0.45\textwidth}
        \includegraphics[width=\textwidth]{img/made_cifar10_anneal5_8e-05_0.01_-1_m_e.png}
        \caption{CIFAR-10 samples.}
    \end{subfigure}
    \caption{MADE CSM samples.}
\end{figure}

\noindent\textbf{PixelCNN++ MLE}
\begin{figure}[H]
    \centering
    \begin{subfigure}[b]{0.32\textwidth}
        \includegraphics[width=\textwidth]{img/mle_mnist_pixelcnn_sample.png}
        \caption{MNIST samples.}
    \end{subfigure}
    ~
    \begin{subfigure}[b]{0.32\textwidth}
        \includegraphics[width=\textwidth]{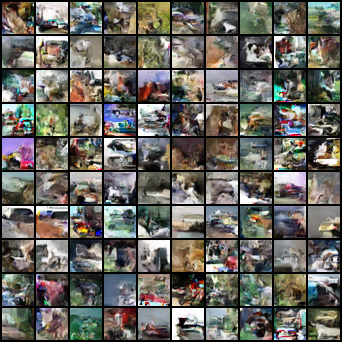}
        \caption{CIFAR-10 samples.}
    \end{subfigure}
    \begin{subfigure}[b]{0.32\textwidth}
        \includegraphics[width=\textwidth]{img/mle_celeba_pixelcnn_sample.png}
        \caption{CelebA samples.}
    \end{subfigure}
    \caption{PixelCNN++ MLE samples.}
\end{figure}

\noindent\textbf{PixelCNN++ CSM}
\begin{figure}[H]
    \centering
    \begin{subfigure}[b]{0.32\textwidth}
        \includegraphics[width=\textwidth]{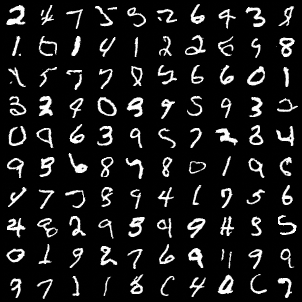}
        \caption{MNIST samples.}
    \end{subfigure}
    ~
    \begin{subfigure}[b]{0.32\textwidth}
        \includegraphics[width=\textwidth]{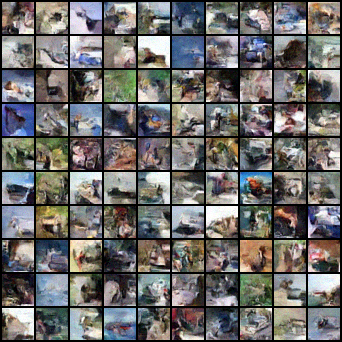}
        \caption{CIFAR-10 samples.}
    \end{subfigure}
    \begin{subfigure}[b]{0.32\textwidth}
        \includegraphics[width=\textwidth]{img/celeba_anneal50_0.0005_0.01_-1_m_e.png}
        \caption{CelebA samples.}
    \end{subfigure}
    \caption{PixelCNN++ CSM samples.}
\end{figure}

\subsection{Noise annealing}
\label{app:noise-annealing}
\begin{figure}[H]
\centering
\includegraphics[width=0.76\textwidth]{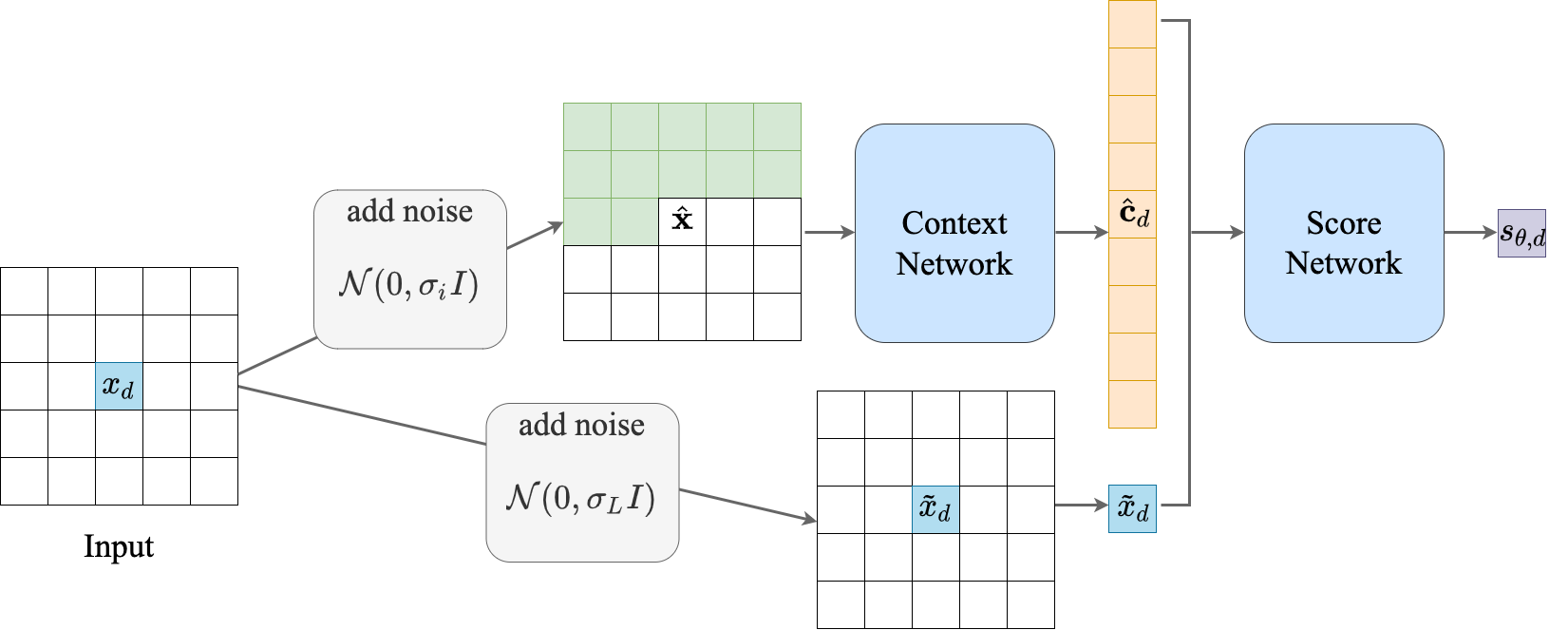}
\caption{Conditional noise annealing at dimension $d$. The context $\hat{c}_d$ only depends on the pixels in front of $\hat{x}_d$ in $\hat{\bfx}$ (\ie the green ones in $\hat{\bfx}$).}
\label{fig:app_anneal_architecture}
\end{figure}

Training score-based generative modeling has been a challenging problem due to the manifold hypothesis and the existence of low data density regions in the data distribution. \cite{song2019generative} shows the efficacy of noise annealing while addressing the above challenges. Based on their arguments, we adopt a noise annealing scheme for training one dimensional score matching. More specifically, we choose a positive geometric sequence $\{\sigma_i\}_{i=1}^{L}$ that satisfies $\frac{\sigma_1}{\sigma_2}=...=\frac{\sigma_{L-1}}{\sigma_L}>1$ to be our noise levels. 
Since at each dimension, we perform score matching \textit{w.r.t.} $x_{d}$ given $\bfx_{<d}$, the previous challenges apply to the one dimensional distribution $p(x_d|\bfx_{<d})$. We thus propose to perform noise annealing only on the scalar $x_d$.
This process requires us to deal with $x_d$ and $\bfx_{<d}$ separately.
For convenience, let us denote the input for the autoregressive model (context network) as $\hat{\bfx}$, and the scalar pixel that will be concatenated with the context vector as $\Tilde{x}_d$.
We decompose the training process into $L$ stages. At stage $i$, we choose $\sigma_i$ to be the noise level for $\Tilde{x}_d$ and use the perturbed noise distribution $p_{\sigma_i}(\Tilde{x}_d|x_d)=\mathcal{N}(\Tilde{x}_d|x_d, \sigma^2_i)$ to obtain $\Tilde{x}_d$. We use a shared noise level $\hat{\sigma}$ among all stages for $\hat{\bfx}$ and use a perturbed noise distribution $p_{\hat{\sigma}}(\hat{\bfx}|\bfx)=\mathcal{N}(\hat{\bfx}, |\bfx, \hat{\sigma}^2I_D)$. 
We feed $\hat{\bfx}$ to the context network to obtain the context vector $\hat{\bfc}_{d}$ and concatenate it with $\Tilde{x}_d$ to obtain $\bfh_d=[\hat{\bfc}_{d}, \Tilde{x}_d]$, which is then fed into the score network to obtain conditional scores $s_{\theta, d}(\bfx)$ (see \figref{fig:app_anneal_architecture}).
At each stage, we train the network until convergence before moving on to the next stage. We denote the learned data distribution and the model distribution at stage $i$ for the $d$-th dimension as $p_{\sigma_i}(\Tilde{x}_d|\hat{\bfx}_{<d})$ and $q_{\theta, \sigma_i}(\Tilde{x}_d|\hat{\bfx}_{<d})$ respectively. As the perturbed noise for $\Tilde{x}_d$ gradually decreases \textit{w.r.t.} the stages, we call this process \textit{conditional noise annealing}. For consistency, we want the distribution of $\hat{\bfx}$ to match the final state distribution of $\Tilde{\bfx}=(\Tilde{x}_1,...,\Tilde{x}_{D})$, we thus choose $\hat{\sigma}=\sigma_{D}$ as the perturbed distribution for $\hat{\bfx}$ among all the stages.

\subsection{Inference with annealed autoregressive Langevin dynamics}
\label{app:inference_langevin_dynamics}
To sample from the model, we can sample each dimension sequentially using one dimensional Langevin dynamics. For the $d$-th dimension, given a fixed step size $\eps > 0$ and an initial value $\Tilde{x}_d^{[0]}$ drawn from a prior distribution $\pi(\bfx)$ (\eg, a standard normal distribution $\mathcal{N}(0,1)$), the one dimensional Langevin method recursively computes the following based on the already sampled previous pixels ${\bfx}^{[T]}_{<d}$
\begin{equation}
    {x}_d^{[t]} = {x}_d^{[t-1]} + \frac {\eps}{2} \frac{\partial}{\partial x_d^{[t-1]}} \log p({x}_d^{[t-1]} | {\bfx}_{<d}^{[T]}) + \sqrt{\eps}\bfz_t,\; t=1,...,T,
\label{eq:ar-langevin}
\end{equation}
where $\bfz_t\sim \mathcal{N}(0,1)$.
When $\eps\to 0$ and $T\to \infty$, the distribution of $x_d^{[T]}$ matches $p({x}_d | {\bfx}_{<d})$, in which case $x_d^{[T]}$ 
is an exact sample from $p({x}_d | {\bfx}_{<d})$ under some regularity conditions~\cite{welling2011bayesian}.
Similar as \cite{song2019generative}, we can use annealed Langevin dynamics to speed up the mixing speed of one dimensional Langevin dynamics. 
Let $\eps_0>0$ be a prespecified constant scalar, we decompose the sampling process into $L$ stages for each $d$. 
At stage $i$, we run autoregressive Langavin dynamics to sample from $p_{\sigma_i}(\Tilde{x}_d|\hat{\bfx}_{<d})$ using the model $q_{\theta,\sigma_i}(\Tilde{x}_d|\hat{\bfx}_{<d})$ learned at the $i$-th stage of the training process. We define the anneal Langevin dynamics update rule as
\begin{equation}
    \Tilde{x}_{d}^{[t]} = \Tilde{x}_{d}^{[t-1]} + \frac {\eps}{2} \frac{\partial}{\partial \Tilde{x}_{d}^{[t-1]}} \log q_{\theta,\sigma_i}(\Tilde{x}_{d}^{[t-1]} | \hat{\bfx}_{<d}^{[T]}) + \sqrt{\eps}\bfz_t,\; t=1,...,T,
\label{eq:ar-langevin-anneal}
\end{equation}
where $\bfz_t\sim \mathcal{N}(0,1)$. We choose the step size $\eps=\eps_0\cdot\frac{\sigma^2_i}{\sigma^2_L}$ for the same reasoning as discussed in \cite{song2019generative}.
At stage $i>1$, we set the initial state $\Tilde{x}_{d}^{[0]}$ to be the final samples of the previous simulation at stage $i-1$; and at stage one, we set the initial value $\Tilde{x}_1^{[0]}$ to be random samples drawn from the prior distribution $\mathcal{N}(0, 1)$. For each dimension $d$, we start from stage one, repeat the anneal Langevin sampling process for $\Tilde{x}_d$ until we reach stage $L$, in which case we have sampled the $d$-th component from our model. Compared to Langevin dynamics performed on a high dimensional space, one dimensional Langevin dynamics is shown to be able to converge faster under certain regularity conditions~\cite{roberts1996exponential}.

\subsection{Setup}
For CelebA, we follow a similar setup as 
\cite{song2019generative}: we first center-crop the images to $140 \times 140$ and then resize them to $32 \times 32$. All images are rescaled so that pixel values are located between $-1$ and $1$. We choose $L = 10$ different noise levels for $\{\sigma_i\}_{i=1}^{L}$. For MNIST, we use $\sigma_1 = 1.0$ and $\sigma_L = 0.04$, and $\sigma_1 = 0.2$ and $\sigma_L = 0.04$ are used for CIFAR-10 and CelebA. We notice that for the used image data, due to the rescaling, a Gaussian noise with $\sigma = 0.04$ is almost indistinguishable to human eyes. 
During sampling, we find $T = 20$ for MNIST and $ T=10$ for CIFAR-10 and CelebA work reasonably well for anneal autoregressive Langevin dynamics in practice. We select two existing autoregressive models, MADE~\cite{germain2015made} and PixelCNN++~\cite{salimans2017pixelcnn++}, as the architectures for our autoregressive context network (AR-CN). For all the experiments, we use a shallow fully connected network as the architecture for the conditional score network (CSN). The amount of parameters for this shallow fully connected network is almost negligible compared to the autoregressive context network.
We train the models for 200 epochs in total, using Adam optimizer with learning rate $0.0002$.

\section{Additional details of VAE experiments}
\label{app:vae_extra}
\subsection{Background}
Given a latent variable model $p(\bfx, \bfz)$ where $\bfx$ is the observed variable and $\bfz$ is the latent variable, a VAE contains the following two parts: i) an encoder $q_{\phi}(\bfz | \bfx)$ that models the conditional distribution of the latent variable given the observed data; and ii) a decoder $p_{\theta}(\bfx|\bfz)$ that models the posterior distribution of the latent variable.
In general, a VAE is trained by maximizing the evidence lower bound (ELBO):
\begin{align}
    \mathbb{E}_{p_{\text{data}}(\bfx)}[\mathbb{E}_{q_{\phi}(\bfz|\bfx)}\log p_{\theta}(\bfx|\bfz)p(\bfz)-\mathbb{E}_{q_{\phi}(\bfz|\bfx)}\log q_{\phi}(\bfz|\bfx)].
\label{eq:app_elbo}
\end{align}
We refer to this traditional training method as "ELBO" throughout the discussion. In ELBO, $q_{\phi}(\bfz|\bfx)$ is often chosen to be a simple distribution such that $H(q_{\phi}(\cdot|\bfx))\triangleq-\mathbb{E}_{q_{\phi}(\bfz|\bfx)}[\log q_{\phi}(\bfz|\bfx)]$ is tractable, which constraints the flexibility of an encoder. 
\subsection{Training VAEs with implicit encoders}
Instead of parameterizing $q_{\phi}(\bfz|\bfx)$ directly as a normalized density function, we can parameterize the encoder using an implicit distribution, which removes the above constraints imposed on ELBO. We call such encoder an implicit encoder.
Denote $H_{d}(q_\phi(\cdot|\bfz_{<d}, \bfx))\triangleq -\mathbb{E}_{q_{\phi}(z_d|\bfz_{<d}, \bfx)}[\log q_{\phi}(z_d|\bfz_{<d},\bfx)]$, using the chain rule of entropy, we have
\begin{align}
H(q_\phi(\cdot|\bfx)))=
    -\mathbb{E}_{q_{\phi}(\bfz|\bfx)}[\log q_{\phi}(\bfz|\bfx)]&=
    \sum_{d=1}^{D} H_{d}(q_{\phi}(\cdot|\bfz_{<d}, \bfx)).
\end{align}
Suppose $z_d\sim q_{\phi}(z_d| \bfz_{<d}, \bfx)$ can be parameterized as $z_d = h_{\phi, d}(\eps_d, \bfz_{<d}, \bfx)$, where $\eps_d$ is a simple one dimensional random variable independent of $\phi$ (\textit{i.e.} a standard normal) and $h_{\phi, d}$ is a deterministic mapping depending on $\phi$ at dimension $d$. By plugging in $\bfz_{<d}$ into $h_{\phi, d}$ and using $z_d = h_{\phi, d}(\eps_d, \bfz_{<d}, \bfx)$ recursively,
we can show that $z_d$ can be reparametrized as $g_{\phi, d}(\bfe_{\le d},\bfx)$, which is a deterministic mapping depending on $\phi$. This provides the following equality for the gradient of $H_{d}(q_{\phi}(\cdot|\bfx))$ \textit{w.r.t.} $\phi$
\begin{align}
    \nabla_{\phi}H_{d}(q_\phi(\cdot|\bfz_{<d}, \bfx))&\triangleq -\nabla_{\phi} \mathbb{E}_{q_{\phi}(z_d|\bfz_{<d}, \bfx)}[\log q_{\phi}(z_d|\bfz_{<d},\bfx)]\\
    &=-\mathbb{E}_{p(\bfe_d)}[\frac{\partial}{\partial z_d}\log q_{\phi}(z_d|\bfz_{<d},\bfx)|_{z_d=g_{\phi, d}(\bfe_{\le d}, \bfx)}\nabla_{\phi}g_{\phi, d}(\bfe_{\le d}, \bfx)]
\label{eq:vae_reparameterization}
\end{align}
(See \appref{app:vae_extra_reparameterization}). This implies
\begin{align*}
    \nabla_{\phi}H(q_\phi(\cdot|\bfx)))
    &=-\sum_{d=1}^{D}\mathbb{E}_{p(\bfe_{<d})}\mathbb{E}_{p(\bfe_d)}\bigg[\frac{\partial}{\partial z_d}\log q_{\phi}(z_d|\bfz_{<d}, \bfx)|_{z_d=g_{\phi, d}(\bfe_{\le d}, \bfx)}\nabla_{\phi}g_{\phi, d}(\bfe_{\le d}, \bfx)\bigg],
\end{align*}
Besides the aforementioned encoder and decoder, to train a VAE model using an implicit encoder, we introduce a third model: an AR-CSM model with parameter $\Tilde{\phi}$ denoted as $s_{\Tilde{\phi}}(\bfz|\bfx)$ that is used to approximate the conditional score $s_{{\phi}, d}(\bfz| \bfx)\triangleq\frac{\partial}{\partial z_d} \log q_{\phi}(z_d|\bfz_{< d},\bfx)$ of the implicit encoder.
During training, we draw i.i.d. samples $\{\bfz^{(1)},...,\bfz^{(N)}\}$ from the implicit encoder $q_{\phi}(\bfz|\bfx)$ and use these samples to train $s_{\Tilde{\phi}}(\bfz|\bfx)$ to approximate the conditional scores of the encoder using CSM. After $s_{\Tilde{\phi}}(\bfz|\bfx)$ is updated, we use the $d$-th component of $s_{\Tilde{\phi}}(\bfz|\bfx)$, the approximation of $\frac{\partial}{\partial z_d}\log  q_{\phi}(z_d|\bfz_{<d}, \bfx)$, as the substitution for $\frac{\partial}{\partial z_d}\log  q_{\phi}(z_d|\bfz_{<d}, \bfx)$ in \eqref{eq:vae_reparameterization}. To compute $\nabla_{\phi}H(q_\phi(\cdot|\bfx))$, we can detach the approximated conditional score $s_{\Tilde{\phi}}(\bfz|\bfx)$ so that the gradient of $H(q_\phi(\cdot|\bfx))$ could be approximated properly using PyTorch backpropagation. This provides us with a way to evaluate the gradient of \equref{eq:elbo} \textit{w.r.t.} $\phi$, which can be used to update the implicit encoder $q_{\phi}(\bfz|\bfx)$.

\subsection{Reparameterization}
\label{app:vae_extra_reparameterization}
\begin{align}
    \nabla_{\phi}H_{d}(q_\phi(\cdot|\bfz_{<d}, \bfx))&\triangleq -\nabla_{\phi} \mathbb{E}_{q_{\phi}(z_d|\bfz_{<d}, \bfx)}[\log q_{\phi}(z_d|\bfz_{<d},\bfx)]\\
    &=-\nabla_{\phi} \mathbb{E}_{p(\bfe_d)}[\log q_{\phi}(g_{\phi, d}(\bfe_{\le d}, \bfx))]\\
    &=-\mathbb{E}_{p(\bfe_d)}[\nabla_{\phi}\log q_{\phi}(g_{\phi, d}(\bfe_{\le d}, \bfx))]\\
    &=-\mathbb{E}_{p(\bfe_d)}[\frac{\partial}{\partial z_d}\log q_{\phi}(z_d|\bfz_{<d}, \bfx)|_{z_d=g_{\phi, d}(\bfe_{\le d}, \bfx)}\nabla_{\phi}g_{\phi, d}(\bfe_{\le d}, \bfx)].
\label{eq:vae_reparameterization_full}
\end{align}

\subsection{Setup}
\label{app:vae_setup}
For CelebA, we follow the setup in \cite{song2019sliced}. We first center-crop all images to a patch of $140 \times 140$, and then resize the image size to $64\times 64$. For MNIST experiments, we use RMSProp optimizer with a learning rate of 0.001 for all methods except for the CSM experiments where we use learning rate of 0.0002 for the score estimator. On CelebA,  we use RMSProp optimizer with a learning rate of 0.0001 for all methods except for the CSM experiments where we use a learning rate of 0.0002 for the score estimator.

\newpage
\section{VAE with implicit encoders}
\label{app:vae_samples}
\textbf{CSM}
\\
\begin{figure}[H]
    \centering
    \begin{subfigure}[b]{0.32\textwidth}
        \includegraphics[width=\textwidth]{img/ar_sm_8.png}
    \end{subfigure}
    \begin{subfigure}[b]{0.32\textwidth}
        \includegraphics[width=\textwidth]{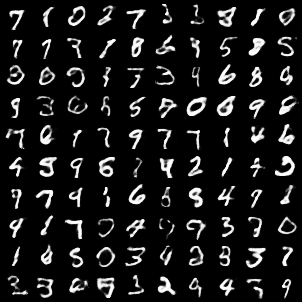}
    \end{subfigure}
    \begin{subfigure}[b]{0.32\textwidth}
        \includegraphics[width=\textwidth]{img/ar_sm_celeba.png}
    \end{subfigure}
    \caption{From left to right: VAE CSM MNIST samples with latent dimension  8, VAE CSM MNIST samples with latent dimension  16, VAE CSM CelebA samples with latent dimension 32.}
    \label{fig:vae_elbo_samples}
\end{figure}
\textbf{ELBO}
\\
\begin{figure}[H]
    \centering
    \begin{subfigure}[b]{0.32\textwidth}
        \includegraphics[width=\textwidth]{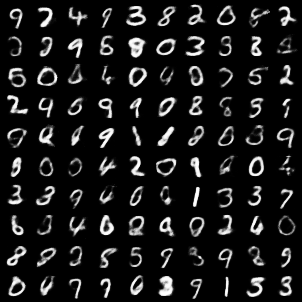}
    \end{subfigure}
    \begin{subfigure}[b]{0.32\textwidth}
        \includegraphics[width=\textwidth]{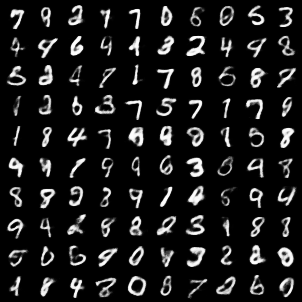}
    \end{subfigure}
    \begin{subfigure}[b]{0.32\textwidth}
        \includegraphics[width=\textwidth]{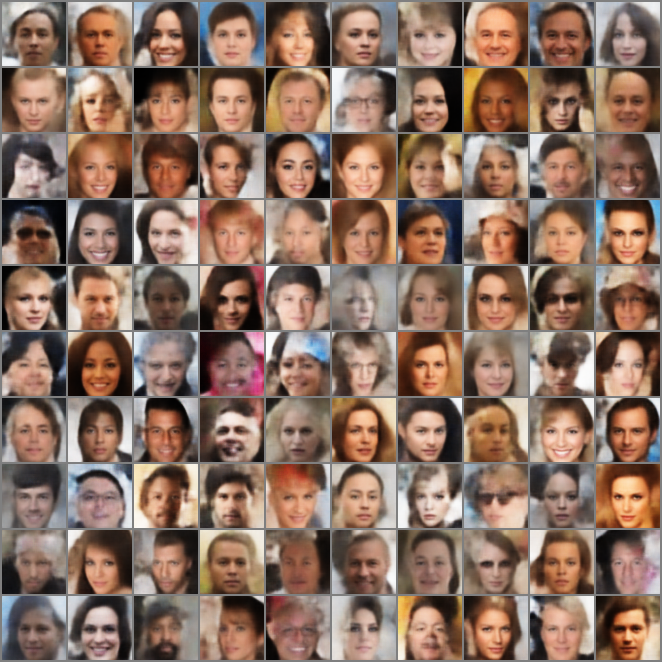}
    \end{subfigure}
    \caption{From left to right: VAE ELBO MNIST samples with latent dimension  8, VAE ELBO MNIST samples with latent dimension  16, VAE ELBO CelebA samples with latent dimension 32.}
    \label{fig:vae_elbo_samples}
\end{figure}
\textbf{Stein}
\\
\begin{figure}[H]
    \centering
    \begin{subfigure}[b]{0.32\textwidth}
        \includegraphics[width=\textwidth]{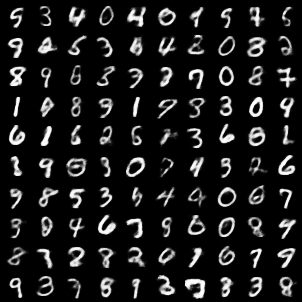}
    \end{subfigure}
    \begin{subfigure}[b]{0.32\textwidth}
        \includegraphics[width=\textwidth]{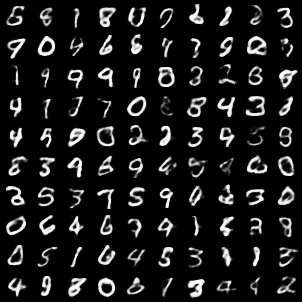}
    \end{subfigure}
    \begin{subfigure}[b]{0.32\textwidth}
        \includegraphics[width=\textwidth]{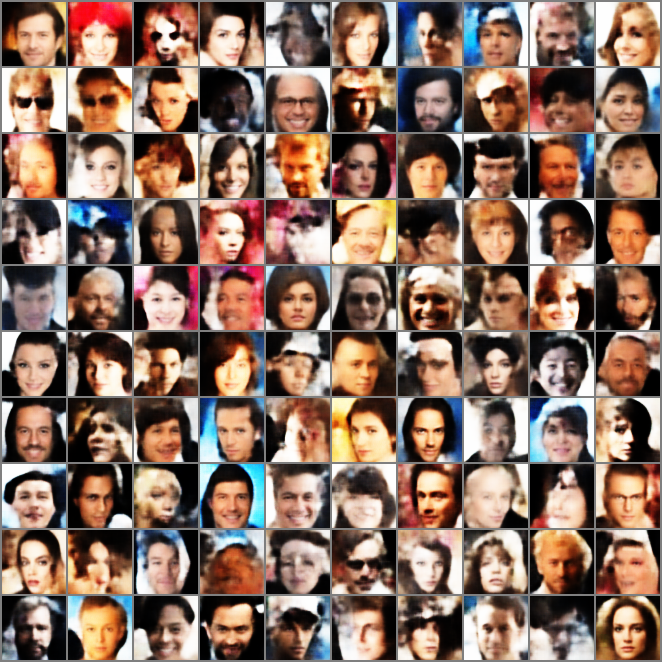}
    \end{subfigure}
    \caption{From left to right: VAE Stein MNIST samples with latent dimension  8, VAE Stein MNIST samples with latent dimension  16, VAE Stein CelebA samples with latent dimension 32.}
    \label{fig:vae_stein_samples}
\end{figure}
\textbf{Spectral}
\\
\begin{figure}[H]
    \centering
    \begin{subfigure}[b]{0.32\textwidth}
        \includegraphics[width=\textwidth]{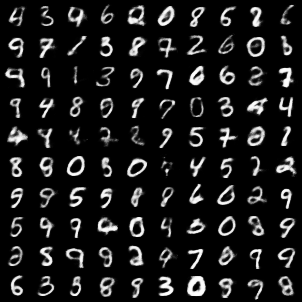}
    \end{subfigure}
    \begin{subfigure}[b]{0.32\textwidth}
        \includegraphics[width=\textwidth]{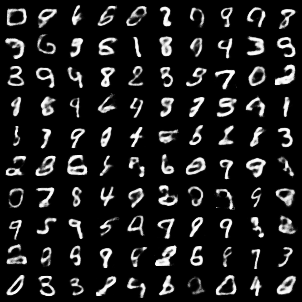}
    \end{subfigure}
    \begin{subfigure}[b]{0.32\textwidth}
        \includegraphics[width=\textwidth]{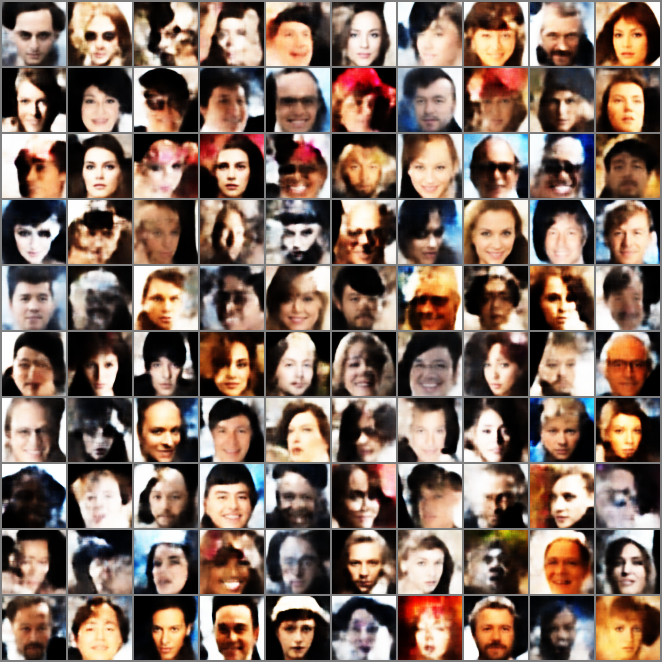}
    \end{subfigure}
    \caption{From left to right: VAE Spectral MNIST samples with latent dimension  8, VAE Spectral MNIST samples with latent dimension  16, VAE Spectral CelebA samples with latent dimension 32.}
    \label{fig:vae_spectral_samples}
\end{figure}
\textbf{SSM-AR}
\\
\begin{figure}[H]
    \centering
    \begin{subfigure}[b]{0.32\textwidth}
        \includegraphics[width=\textwidth]{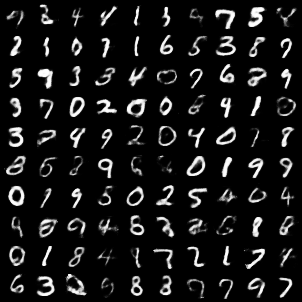}
    \end{subfigure}
    \begin{subfigure}[b]{0.32\textwidth}
        \includegraphics[width=\textwidth]{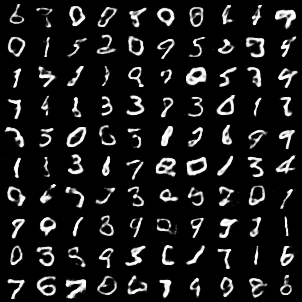}
    \end{subfigure}
    \begin{subfigure}[b]{0.32\textwidth}
        \includegraphics[width=\textwidth]{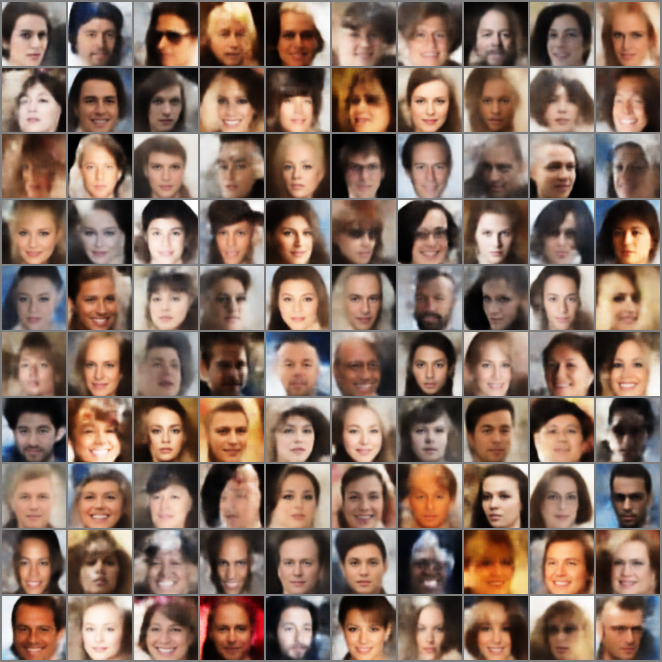}
    \end{subfigure}
    \caption{From left to right: VAE SSM-AR MNIST samples with latent dimension  8, VAE SSM-AR MNIST samples with latent dimension  16, VAE SSM-AR CelebA samples with latent dimension 32.}
    \label{fig:vae_ssm_ar_samples}
\end{figure}
\textbf{SSM}
\begin{figure}[H]
    \centering
    \begin{subfigure}[b]{0.32\textwidth}
        \includegraphics[width=\textwidth]{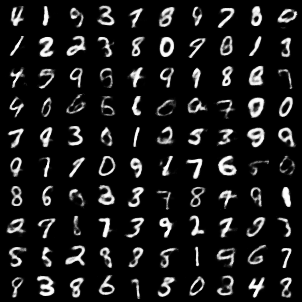}
    \end{subfigure}
    \begin{subfigure}[b]{0.32\textwidth}
        \includegraphics[width=\textwidth]{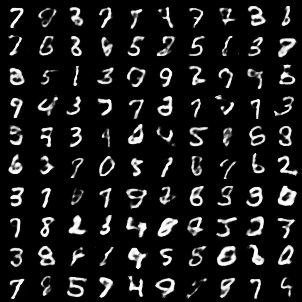}
    \end{subfigure}
    \begin{subfigure}[b]{0.32\textwidth}
        \includegraphics[width=\textwidth]{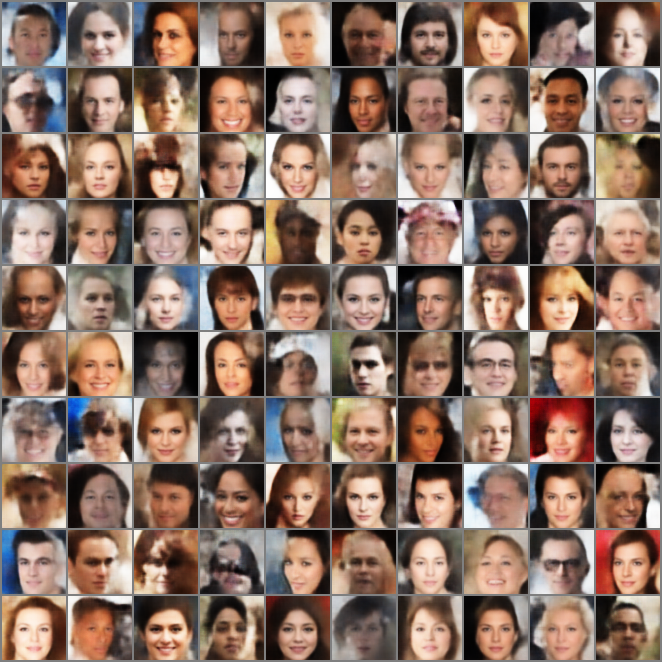}
    \end{subfigure}
    \caption{From left to right: VAE SSM MNIST samples with latent dimension  8, VAE SSM MNIST samples with latent dimension 16, VAE SSM CelebA samples with latent dimension 32.}
    \label{fig:vae_ssm_samples}
\end{figure}

\end{document}